\makeatletter\let\showhyphens\@undefined\makeatother
\documentclass{article}

\usepackage{natbib}

\usepackage{arxiv}

\usepackage[utf8]{inputenc} 
\usepackage[T1]{fontenc}    
\usepackage{hyperref}       
\usepackage{url}            
\usepackage{booktabs}     
\usepackage{amsthm}
\usepackage{amsfonts}       
\usepackage{nicefrac}       
\usepackage{float} 
\usepackage{microtype}      
\usepackage{lipsum}
\usepackage{graphicx}
\newtheorem{theorem}{Theorem}
\newtheorem{lemma}{Lemma}
\newtheorem{corollary}{Corollary}
\newtheorem{remark}{Remark}
\newtheorem{definition}{Definition}
\newtheorem{assumption}{Assumption}
\makeatother
\usepackage{algorithmic}
\usepackage{algorithm}
\usepackage{array}
\usepackage{amsmath}

\usepackage[caption=false,font=normalsize,labelfont=sf,textfont=sf]{subfig}
\usepackage{textcomp}
\usepackage{stfloats}
\usepackage{url}
\usepackage{verbatim}
\usepackage{graphicx}
\usepackage{cleveref}

\usepackage{multirow}

\usepackage{lastpage}
\graphicspath{ {./images/} }

\title{Fisher-Geometric Sharpness and the Implicit Bias of SGD toward Flat Minima}

\author{
 Md Sakir Ahmed \\
  Department of Electronics and Communication Technology\\
  Gauhati University\\
  Guwahati, Assam, India \\
  \texttt{ahmedsakir717@gmail.com} \\
  \And
 Kumaresh Sarmah \\
  Department of Electronics and Communication Technology\\
  Gauhati University\\
  Guwahati, Assam, India \\
  \texttt{kumaresh@gauhati.ac.in} \\
  \And
 Hemen Dutta \\
  Department of Mathematics\\
  Gauhati University\\
  Guwahati, Assam, India \\
  \texttt{duttah@gauhati.ac.in} \\
}

\begin{document}
\maketitle
\begin{abstract}
A widely held intuition in deep learning is that stochastic gradient descent (SGD) implicitly favors flat minima and that flat minima generalize better, but the standard Euclidean measures of flatness like the trace or maximum eigenvalue of the loss Hessian are not invariant under reparametrizations that preserve the network function, which undermines the theoretical foundations of this narrative. In this study we resolve this issue by grounding flatness in the Riemannian geometry of the statistical manifold induced by the Fisher Information Matrix (FIM). We define Riemannian sharpness mathematically and prove that it is invariant under smooth, function-preserving reparametrizations, which directly addresses the critique of Dinh et al. in the paper ``Sharp minima can generalize for deep nets''. We note that this invariance is a property of the true FIM; the diagonal empirical estimator used in practice (and in all experiments below) inherits invariance only approximately, and exact invariance under arbitrary reparametrizations would require structured estimators such as K-FAC. We formalize the gradient noise of mini-batch SGD as having a covariance structure proportional to the FIM, derive the stationary distribution of the resulting stochastic differential equation and then show that the probability mass is exponentially concentrated at Riemannian-flat minima. A PAC-Bayes generalization bound controlled explicitly by $\mathcal{S}_R$ formally links this geometric bias to test performance. Our experiments on MNIST and CIFAR-10 confirm that $\mathcal{S}_R$ reliably tracks generalization in ways that Euclidean sharpness does not, and that its scaling with $\eta/B$ matches the theoretical predictions. Together these results provide a rigorous, reparametrization-invariant account of why flat minima generalize.
\end{abstract}

\keywords{Stochastic Gradient Descent, Fisher Information Matrix, Riemannian Geometry, Loss Landscape, Generalization Bounds, Implicit Bias, Flat Minima, Information Geometry, PAC-Bayes, Neural Networks, TinyCNN}

\section{Introduction}

A central question in deep learning theory is that why stochastic gradient descent (SGD) finds solutions that generalize well despite operating in heavily over-parameterized regimes where infinitely many zero-loss solutions exist. A widely held intuition is that SGD implicitly prefers \emph{flat} minima in which the regions of the loss landscape at which the curvature is low along with flat minima that corresponds to models with better generalization. This flatness bias has been empirically documented \cite{keskar2017large} and partially formalized through continuous-time SDE analyses \cite{li2017stochastic} but the theoretical foundation of this intuition remains fragile. The central difficulty was identified by \cite{dinh2017sharp} that any Euclidean-based flatness measure such as the trace or maximum eigenvalue of the loss Hessian is not invariant under reparametrization. For a two-layer network the rescaling weights by $\alpha$ and $\alpha^{-1}$ preserves the function exactly while making the Euclidean sharpness arbitrarily large or small. Since the Euclidean sharpness is not an intrinsic property of the model, therefore, it cannot be meaningfully linked to generalization. This creates a fundamental gap, i.e., if the flatness measure is ill-defined then the entire flatness-generalization narrative is called into question. Partial steps toward invariant measures have been taken \cite{jang2022reparametrization}, \cite{kristiadi2023geometry}, but a unified account connecting invariant sharpness along with the SGD's implicit bias and generalization bounds has remained lacking.

In this study, we have tried to resolve this gap by replacing the Euclidean metric on parameter space with the \textbf{Fisher Information Matrix (FIM)}, which endows the parameter space with a Riemannian structure that is intrinsic to the model's predictive distribution. We then proceed to define \emph{Riemannian sharpness} as $\mathcal{S}_R(\theta^*) = \mathrm{tr}(G(\theta^*)^{-1}\nabla^2\mathcal{L}(\theta^*))$, where $G(\theta^*)$ is the FIM at the minimum. This quantity measures curvature relative to the natural geometry of the statistical manifold and is invariant under smooth reparametrizations that preserve the network function which directly addresses the critique of \cite{dinh2017sharp}. Our main contribution in this study is a rigorous theoretical framework showing that mini-batch SGD has an \emph{implicit geometric bias} toward minima with small Riemannian sharpness. We have formalized the gradient noise of mini-batch SGD as having a covariance structure proportional to the FIM after which we have attempted to derive the stationary distribution of the resulting SDE and have shown that probability mass is exponentially concentrated at Riemannian-flat minima. We further connect the Riemannian flatness to generalization via a PAC-Bayes bound which provides a formally grounded and reparametrization-invariant account of why flat minima generalize so well.

\subsection*{Contributions}

\begin{itemize}
    \item We have defined Riemannian sharpness on the statistical manifold and showed its invariance under reparametrization (Lemma~1), attempting to resolve the fundamental critique of \cite{dinh2017sharp}.

    \item We are explicit about the gap between the theoretical invariance of $\mathcal{S}_R$ (which holds for the true FIM) and its empirical estimator (which is not exactly invariant; see Section~\ref{sec:reparam}).

    \item We have derived the local stationary distribution of the mini-batch SGD SDE and then proved that it assigns exponentially greater mass to Riemannian-flat minima (Theorem~1, Corollary~1).

    \item We have established a PAC-Bayes generalization bound that is explicitly controlled by Riemannian sharpness (Corollary~2) which formally links flatness to test performance.

    \item We have provided empirical validation on MNIST and CIFAR-10, confirming that Riemannian sharpness tracks generalization across optimizers, batch sizes, and learning rates in ways that Euclidean sharpness does not.
\end{itemize}

Unlike prior work \cite{jang2022reparametrization, kristiadi2023geometry} which establishes reparametrization invariance in isolation, this work is the first to unify invariant sharpness with the SGD stationary distribution and a PAC-Bayes generalization bound in a single framework.

\section{Related Work}

\subsection{Flat Minima and Generalization}

The conjecture that the flat minima generalize better than sharp ones was first formalized by \cite{hochreiter1997flat} who used a minimum description length (MDL) argument to show that low-curvature regions of the loss surface correspond to simpler models with lower expected overfitting. This insight motivated algorithms to be explicitly designed to seek flat minima. More recently \cite{keskar2017large} provided influential empirical evidence that large-batch SGD tends to converge to sharp minima, while small-batch SGD consistently finds flatter ones which establishes a direct link between batch size, loss landscape geometry and generalization. \cite{chaudhari2017entropy} had built on this observation with Entropy-SGD which is an algorithm that constructs a local-entropy-based objective to bias gradient descent toward wide and flat valleys of the energy landscape with improved generalization bounds via uniform stability analysis.

A key challenge to this narrative was raised by \cite{dinh2017sharp} who has demonstrated that Euclidean-based flatness measures like the Hessian trace or spectral norm are not invariant under reparametrization. In a two-layer network the rescaling weights by $\alpha$ and $\alpha^{-1}$ changes Euclidean sharpness arbitrarily while leaving predictions unchanged. Our work tries to address this critique by replacing the Euclidean metric with the Fisher Information Matrix which yields a reparametrization-invariant sharpness measure.

\subsection{Reparametrization-Invariant Sharpness Measures}
Jang et al. \cite{jang2022reparametrization} proposed an information-geometric sharpness measure based on the FIM that is invariant under reparametrizations and scale transformations for ReLU networks, and demonstrated its use as a training regularizer. Kristiadi et al. \cite{kristiadi2023geometry} provided a comprehensive Riemannian-geometric analysis of neural network parameter spaces under reparametrization, showing that invariance is an inherent property of any neural network provided the metric is made explicit and transformation rules are correctly applied. Both works establish the geometric foundations that motivate our framework. However neither derives the stationary distribution of mini-batch SGD under Fisher-structured noise nor connects Riemannian flatness to generalization via a PAC-Bayes bound.

\subsection{Sharpness-Aware Minimization}

Being motivated by the flatness-generalization connection, \cite{foret2021sharpness} proposed Sharpness-Aware Minimization (SAM) which modifies the training objective to explicitly minimize the worst-case loss within an $\epsilon$-ball around the current parameters while steering optimization toward flatter regions. SAM and its variants have demonstrated consistent empirical gains in generalization across diverse architectures and datasets. Our work provides a theoretical basis that complements SAM. 
Although SAM operates in Euclidean parameter space, our framework suggests that flatness-aware methods may be most naturally and robustly formulated with respect to the Riemannian geometry induced by the Fisher Information Matrix.

\subsection{Information Geometry and the Fisher Information Matrix}

The use of the Fisher Information Matrix as a Riemannian metric on the statistical manifold of a model's predictive distribution originates from \cite{amari1998natural} who introduced the natural gradient as the steepest descent direction under this geometry. Amari showed that the natural gradient is Fisher efficient and can avoid the plateau phenomenon of standard backpropagation. \v{C}encov's theorem \cite{cencov1982statistical} establishes that the FIM is the \emph{unique} Riemannian metric on the statistical manifold that is invariant under sufficient statistics which provides a canonical justification for our choice of geometry. \cite{martens2015optimizing} subsequently developed K-FAC which is a scalable Kronecker-factored approximation to the natural gradient that is closely related to the FIM structures we use in our empirical estimator.

\subsection{SDE Analyses of SGD}

A complementary line of work studies mini-batch SGD through the lens of stochastic differential equations (SDEs). \cite{li2017stochastic} derived a continuous-time SDE approximation of SGD and analyzed its stationary distribution which provides one of the first rigorous accounts of how noise structure shapes the implicit bias of gradient-based optimization. Our analysis builds directly on this framework where we leverage the FIM as the covariance structure of the gradient noise (Assumption~\ref{ass:noise}) to derive a stationary distribution that assigns exponentially greater mass to Riemannian-flat minima that in turn provides a geometry-aware refinement of the Euclidean SDE analyses that cannot distinguish between flat minima that are reparametrization-related.

\subsection{PAC-Bayes Generalization Bounds}

The PAC-Bayes framework \cite{mcallester1999pac} which provides data-dependent generalization guarantees for randomized predictors has become a natural tool for connecting flatness to generalization. \cite{dziugaite2017computing} demonstrated that PAC-Bayes bounds remain non-vacuous for large deep networks on MNIST by directly optimizing the bound and established a formal connection between flat minima and compressibility of the network's weights. Our Corollary~2 contributes to this line of work by establishing a PAC-Bayes bound that is explicitly controlled by Riemannian sharpness $\mathcal{S}_R$, rather than Euclidean flatness thereby making the bound reparametrization-invariant and links it more tightly to the intrinsic geometry of the model.

\subsection{Implicit Bias of SGD}

Beyond flatness, a broader literature studies the implicit bias of gradient-based optimizers. \cite{wilson2020marginal} examined the relationship between Bayesian deep learning, flat minima and generalization, from a probabilistic perspective. Furthermore, more recent work has formalized implicit bias in specific settings, like, studies on linear networks \cite{mulayoff2020implicit} and ReLU networks have shown that SGD with large step size is biased toward functions with bounded input-output sensitivity which connects flatness to Sobolev-type regularization of the learned function \cite{nacson2022implicit}. Our work can be considered complementary to this, where rather than characterizing the implicit bias in terms of the learned function, we characterize it in terms of the geometry of the parameter manifold and we have done so in a reparametrization-invariant manner.

\section{Methodology}
\label{sec:methodology}

\subsection{Problem Setup}

We have considered a supervised learning setting where a neural network $f(\cdot\,;\theta): \mathcal{X} \to \mathbb{R}^K$ with parameters $\theta \in \mathbb{R}^d$ defines a conditional distribution $p(y \mid x; \theta)$ over $K$ classes. The training objective is the empirical cross-entropy loss $\mathcal{L}(\theta) = -\frac{1}{n}\sum_{i=1}^n \log p(y_i \mid x_i; \theta)$, minimized via mini-batch SGD with learning rate $\eta$ and batch size $B$.

\subsection{Riemannian Sharpness}

The standard sharpness measures evaluate the curvature of the loss landscape using the Euclidean metric on $\mathbb{R}^d$. The most common such measure is the Euclidean sharpness:
\begin{equation}
    \mathcal{S}_E(\theta^*) = \mathrm{tr}(\nabla^2\mathcal{L}(\theta^*)),
\end{equation}
which is the sum of the eigenvalues of the loss Hessian. As shown by \cite{dinh2017sharp}, $\mathcal{S}_E$ is not invariant under reparametrizations that preserve the network function, thereby making it an unreliable measure of intrinsic geometry. Prior work have proposed information-geometric alternatives \cite{jang2022reparametrization} and studied the general Riemannian structure of parameter spaces \cite{kristiadi2023geometry} and we have built on this foundation by grounding the sharpness measure within a unified SDE and PAC-Bayes framework. 

We have instead equipped the parameter space with the Riemannian metric given by the Fisher Information Matrix $G(\theta)$, defined in \eqref{eq:fim}. This metric is intrinsic to the model's predictive distribution and from the \v{C}encov's theorem \cite{cencov1982statistical} we know it is the unique invariant metric on the statistical manifold. The Riemannian sharpness, as stated in \eqref{eq:rsharp} measures curvature in the coordinate system that is natural to the model rather than to an arbitrary Euclidean embedding.

\subsection{Empirical Estimation}

While computing $G(\theta)$ exactly requires an expectation over the full data distribution which makes it intractable so we have used the standard diagonal empirical FIM approximation:
\begin{equation}
    \hat{G}(\theta)_{ii} \approx \frac{1}{N_b}\sum_{j=1}^{N_b}
    \left(\frac{\partial \log p(\tilde{y}_j \mid x_j; \theta)}
    {\partial \theta_i}\right)^2,
    \label{eq:emp_fim}
\end{equation}
where $\tilde{y}_j \sim p(\cdot \mid x_j; \theta)$ is sampled from the model's own predictive distribution (empirical Fisher) and $N_b$ is the number of mini-batches used for estimation. 

We note that this diagonal approximation, while computationally necessary for networks of practical scale, does not inherit the full reparametrization invariance of the true $G(\theta)$; this gap is examined empirically in Section~\ref{sec:reparam}.The diagonal approximation is standard in natural gradient and K-FAC methods and is necessary for tractability at the scale of modern networks.

The Hessian diagonal is estimated using Hutchinson's stochastic trace estimator and for Rademacher random vectors $v \sim \{\pm 1\}^d$,
\begin{equation}
    \mathrm{tr}(H) = \mathbb{E}_v[v^\top H v],
\end{equation}
approximated with $n_p$ probe vectors per mini-batch, the Riemannian sharpness is then computed as:
\begin{equation}
    \hat{\mathcal{S}}_R(\theta^*) = \sum_{i=1}^d
    \frac{\hat{H}_{ii}}{\hat{G}(\theta^*)_{ii} + \lambda},
    \label{eq:emp_rsharp}
\end{equation}
where $\lambda > 0$ is a Tikhonov damping constant that prevents numerical blow-up for near-zero FIM entries which follows standard practice in K-FAC.
 
\subsection{Experimental Protocol}

We have evaluated our study based on two benchmark settings. For MNIST we trained a three-layer MLP with hidden dimension 256 and ReLU activations, and for CIFAR-10 we used a TinyCNN (a lightweight convolutional network suitable for controlled ablation studies). All models were trained with cross-entropy loss for 20 epochs (MNIST) and 30 epochs (CIFAR-10), and we performed the following ablations:

\textbf{Optimizer comparison.} We compared mini-batch SGD and Adam across both architectures to assess whether the implicit bias toward Riemannian flatness is optimizer-specific.

\textbf{Batch size ablation.} We varied $B \in \{32, 128, 512\}$ at fixed $\eta$ to test the theoretical prediction that $\mathcal{S}_R \propto \eta/B$ (Theorem~\ref{thm:main}).

\textbf{Learning rate ablation.} We fixed $B = 128$ and varied $\eta \in \{0.001, 0.01, 0.05, 0.1\}$ to test the scaling of $\mathcal{S}_R$ with $\eta$.

\textbf{Seed robustness.} All batch size ablation configurations are repeated over three random seeds; we report mean $\pm$ standard deviation.

Riemannian and Euclidean sharpness are logged at epochs 5, 10, 15, and 20 (MNIST) or 5, 10, 15, 20, 25, and 30 (CIFAR-10) alongside test accuracy.

\section{Theoretical Framework}

We have formalized the parameter space of a neural network as a statistical manifold and derive the implicit geometric bias of mini-batch SGD.

\subsection{Setup: The Statistical Manifold}

Let $f(\cdot\,;\theta): \mathcal{X} \to \mathbb{R}^K$ be a neural network with parameters $\theta \in \mathbb{R}^d$, defining a family of conditional distributions $p(y \mid x; \theta)$. This family forms a Riemannian manifold $(\mathbb{R}^d, G(\theta))$ where the metric tensor is the \textbf{Fisher Information Matrix (FIM)}:
\begin{equation}
    G(\theta)_{ij} \;=\; \mathbb{E}_{x,y}\!\left[
        \frac{\partial \log p(y \mid x;\theta)}{\partial \theta_i}
        \frac{\partial \log p(y \mid x;\theta)}{\partial \theta_j}
    \right].
    \label{eq:fim}
\end{equation}
We define \textbf{Riemannian sharpness} at a minimum $\theta^*$ as:
\begin{equation}
    \mathcal{S}_R(\theta^*) \;=\; 
    \mathrm{tr}\!\left(G(\theta^*)^{-1} \nabla^2 \mathcal{L}(\theta^*)\right),
    \label{eq:rsharp}
\end{equation}
where $\nabla^2 \mathcal{L}(\theta^*)$ is the Hessian of the training loss. 

We state that a minimum $\theta^*$ is $(\lambda, G)$-\emph{flat} if $\lambda_{\max}(G(\theta^*)^{-1}\nabla^2\mathcal{L}(\theta^*)) \leq \lambda$.

\subsection{Lemma 1: Reparametrization Invariance of the FIM Metric}

\begin{lemma}[Reparametrization Invariance]
\label{lem:reparam}
Let $\phi: \mathbb{R}^d \to \mathbb{R}^d$ be a smooth reparametrization such that the network function is preserved: $f(\cdot\,; \theta) = f(\cdot\,; \phi(\theta))$ for all $\theta$. Then the Riemannian sharpness $\mathcal{S}_R$ is invariant under $\phi$. That is,
\begin{equation}
    \mathcal{S}_R(\theta^*) \;=\; \mathcal{S}_R(\phi(\theta^*)).
\end{equation}
In contrast, the Euclidean sharpness $\mathcal{S}_E = \mathrm{tr}(\nabla^2 \mathcal{L})$ is \emph{not} invariant and can be made arbitrarily large or small under $\phi$.
\end{lemma}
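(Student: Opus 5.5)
The plan is to track how both the Hessian and the Fisher metric transform under a change of coordinates, and then use cyclicity of the trace. We read the hypothesis in the standard sense of \cite{dinh2017sharp}: $\phi$ is a diffeomorphism and $\tilde\theta = \phi(\theta)$ is a new coordinate. The loss and the predictive distribution in the new coordinates are the pullbacks $\tilde{\mathcal{L}}(\tilde\theta) = \mathcal{L}(\phi^{-1}(\tilde\theta))$ and $\tilde p(y\mid x;\tilde\theta) = p(y\mid x;\phi^{-1}(\tilde\theta))$. This is exactly what "the network function is preserved" means. The quantity $\mathcal{S}_R(\phi(\theta^*))$ is then computed with $\tilde{\mathcal{L}}$ and $\tilde G$. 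Write $J = D\phi^{-1}(\tilde\theta^*)$, which is invertible.

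\textbf{Step 1 (metric).} Apply the chain rule to the score: $\nabla_{\tilde\theta}\log\tilde p = J^\top \nabla_\theta \log p$. Taking the expectation in \eqref{eq:fim} gives $\tilde G(\tilde\theta^*) = J^\top G(\theta^*) J$. This is the tensorial transformation law and holds at every point.

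\textbf{Step 2 (Hessian).} Differentiate $\tilde{\mathcal{L}}$ twice:
\[
\nabla^2\tilde{\mathcal{L}} = J^\top \nabla^2\mathcal{L}\, J + \sum_k \partial_k\mathcal{L}\,\nabla^2(\phi^{-1})_k .
\]
The second term vanishes because $\theta^*$ is a critical point, $\nabla\mathcal{L}(\theta^*)=0$. I expect this to be the main subtle point: invariance is a statement about minima only, and away from critical points one would need the covariant Hessian instead.

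\textbf{Step 3 (trace).} Combine the two steps:
\[
\tilde G^{-1}\nabla^2\tilde{\mathcal{L}} = J^{-1}G^{-1}J^{-\top}J^\top \nabla^2\mathcal{L}\,J = J^{-1}\bigl(G^{-1}\nabla^2\mathcal{L}\bigr)J .
\]
This is a similarity transform, so the trace, and indeed the whole spectrum, is unchanged. This gives $\mathcal{S}_R(\theta^*) = \mathcal{S}_R(\phi(\theta^*))$, and also the invariance of $(\lambda,G)$-flatness. We must assume $G(\theta^*)$ is invertible, or else interpret $G^{-1}$ as a pseudo-inverse restricted to the range. In overparameterized nets $G$ is typically singular, so I would state nondegeneracy explicitly as a hypothesis.

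\textbf{Step 4 (Euclidean non-invariance).} By Step 2, $\mathcal{S}_E$ transforms as $\mathrm{tr}(J^\top H J)$, which depends on $J$. To exhibit unboundedness, take the two-layer ReLU net $f(x) = W_2\,\mathrm{relu}(W_1 x)$ and $\phi_\alpha(W_1,W_2) = (\alpha W_1, \alpha^{-1}W_2)$. Positive homogeneity of ReLU makes this function-preserving. Here $J = \mathrm{diag}(\alpha^{-1}I, \alpha I)$, so
\[
\mathcal{S}_E = \alpha^{-2}\,\mathrm{tr}(H_{11}) + \alpha^{2}\,\mathrm{tr}(H_{22}).
\]
If $\mathrm{tr}(H_{22})>0$ (generic at a nondegenerate minimum of the cross-entropy), letting $\alpha\to\infty$ sends $\mathcal{S}_E\to\infty$. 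Using the non-homogeneous reparametrizations of \cite{dinh2017sharp} in place of this rescaling likewise drives the sharpness toward zero, giving the "arbitrarily small" direction.
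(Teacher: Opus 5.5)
Your proposal is correct and takes the paper's route: the Fisher metric transforms as a $(0,2)$ tensor, the Hessian's extra second-derivative term vanishes at a critical point, and $G^{-1}\nabla^2\mathcal{L}$ therefore transforms by a similarity, so its trace is unchanged. Your $J=D\phi^{-1}$ is just the inverse of the paper's Jacobian, and the ReLU rescaling supplies the Euclidean contrast, as in the paper. Your block formula $\alpha^{-2}\mathrm{tr}(H_{11})+\alpha^{2}\mathrm{tr}(H_{22})$ is more accurate than the paper's claim $\mathcal{S}_E\to\alpha^2\mathcal{S}_E$, and it correctly shows that this rescaling alone gives only the ``arbitrarily large'' direction. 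Your explicit nondegeneracy hypothesis on $G(\theta^*)$ is well placed, since a continuous function-preserving symmetry such as this rescaling forces $G$ to annihilate the orbit direction $(W_1,-W_2)$.
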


\begin{proof}
Let $\tilde{\theta} = \phi(\theta)$ and let $J = \partial\phi/\partial\theta \in \mathbb{R}^{d \times d}$ denote the Jacobian of the reparametrization. Since $p(y \mid x; \theta) = p(y \mid x; \tilde{\theta})$, the log-likelihood is preserved, and the FIM transforms as a $(0,2)$ tensor:
\begin{equation}
    \tilde{G}(\tilde{\theta}) \;=\; J^{-\top} G(\theta)\, J^{-1}.
    \label{eq:fim_transform}
\end{equation}
The Hessian of the loss transforms as:
\begin{equation}
    \widetilde{\nabla^2\mathcal{L}}(\tilde{\theta}) 
    \;=\; J^{-\top}\nabla^2\mathcal{L}(\theta)\,J^{-1}
    \;+\; \underbrace{
        \sum_k \frac{\partial \mathcal{L}}{\partial \tilde{\theta}_k}
        \frac{\partial^2 \tilde{\theta}_k}{\partial \theta^2}
    }_{=\,0 \text{ at a minimum}},
    \label{eq:hess_transform}
\end{equation}
where the second term vanishes at a critical point $\theta^*$ since $\nabla\mathcal{L}(\theta^*) = 0$. Substituting 
\eqref{eq:fim_transform} and \eqref{eq:hess_transform} into 
\eqref{eq:rsharp}:
\begin{align}
    \mathcal{S}_R(\tilde{\theta}^*) 
    &= \mathrm{tr}\!\left(\tilde{G}^{-1}\,
        \widetilde{\nabla^2\mathcal{L}}\right) \notag \\
    &= \mathrm{tr}\!\left(
        (J^{-\top} G J^{-1})^{-1}\,
        J^{-\top}\nabla^2\mathcal{L}\,J^{-1}
    \right) \notag \\
    &= \mathrm{tr}\!\left(
        J G^{-1} J^{\top}\,
        J^{-\top}\nabla^2\mathcal{L}\,J^{-1}
    \right) \notag \\
    &= \mathrm{tr}\!\left(
        J G^{-1} \nabla^2\mathcal{L}\, J^{-1}
    \right) \notag \\
    &= \mathrm{tr}\!\left(
        G^{-1}\nabla^2\mathcal{L}
    \right) 
    \;=\; \mathcal{S}_R(\theta^*),
\end{align}
where the last step uses the cyclic property of the trace. 

For the Euclidean case, we have considered the rescaling symmetry of a two-layer network, $W_1 \mapsto \alpha W_1$, $W_2 \mapsto \alpha^{-1} W_2$ for $\alpha > 0$ which preserves $f$ exactly. Under this reparametrization $J = \mathrm{diag}(\alpha \mathbf{1}, \alpha^{-1}\mathbf{1})$ and the Hessian scales as $\widetilde{\nabla^2\mathcal{L}} \sim \alpha^2$, so $\mathcal{S}_E \to \alpha^2 \mathcal{S}_E$. Taking $\alpha \to \infty$ makes $\mathcal{S}_E$ arbitrarily large, confirming the critique of \cite{dinh2017sharp}.
\end{proof}

\begin{remark}
The invariance of $G$ under sufficient statistics is guaranteed by \v{C}encov's theorem \cite{cencov1982statistical} such that the FIM is the \emph{unique} Riemannian metric on the statistical manifold that is invariant under sufficient statistics which makes it a canonical choice for measuring sharpness \cite{jang2022reparametrization}, \cite{kristiadi2023geometry}.
\end{remark}

\subsection{Theorem 1: SGD Favors Riemannian-Flat Minima}

We make the following assumptions:

\begin{assumption}[Smoothness]
\label{ass:smooth}
$\mathcal{L}$ is $L$-smooth and the FIM $G(\theta)$ is positive definite with condition number bounded by $\kappa < \infty$ in a neighborhood $\mathcal{U}$ of every local minimum.
\end{assumption}

\begin{assumption}[Interpolation]
\label{ass:interp}
The network is overparameterized and achieves zero training loss at every local minimum at $\mathcal{L}(\theta^*) = 0$.
\end{assumption}
\begin{remark}[Empirical status of Assumption~\ref{ass:interp}]
The zero training loss condition holds exactly in the overparameterized interpolating regime and is satisfied in our MNIST experiments. In the large-batch CIFAR-10 setting ($B=512$), training loss does not reach zero and the theoretical guarantees of Theorem~\ref{thm:main} are expected to degrade gracefully. The implicit bias toward Riemannian-flat minima weakens but does not vanish, as the stationary distribution $\pi^* \propto \exp(-\frac{B}{\eta} \mathcal{L}(\theta))$ still assigns greater mass to flatter minima even when $\mathcal{L}(\theta^*) > 0$.
\end{remark}

\begin{assumption}[Fisher Noise Structure]
\label{ass:noise}
The covariance of the mini-batch gradient noise at a minimum $\theta^*$ satisfies:
\begin{equation}
    \mathrm{Cov}\!\left[\nabla\tilde{\mathcal{L}}(\theta^*)\right] 
    \;=\; \frac{\eta}{B}\, G(\theta^*),
\end{equation}
where $\eta$ is the learning rate and $B$ is the batch size.
\end{assumption}

\begin{remark}[Empirical status of Assumption~\ref{ass:noise}]
The equality $\alpha = \eta/B$ is derived under the continuous-time SDE limit, which requires $\eta \to 0$ and $B \to \infty$ in a coordinated fashion~\cite{li2017stochastic}. Neither condition holds in practice. Section~\ref{sec:validation} provides the first systematic empirical test of this assumption, decomposing it into a \emph{directional claim} (that $\mathrm{Cov}[\nabla\tilde{\mathcal{L}}]$ and $G$ are proportional as matrices) and a \emph{scalar claim} (that the constant is exactly $\eta/B$). The directional claim holds strongly across all configurations tested; the scalar claim fails in a structured way. Theorem~\ref{thm:stationary_refined} shows that the main theoretical conclusions of this paper are robust to this weakening.
\end{remark}

\begin{assumption}[Barrier Separation]
\label{ass:barrier}
Distinct local minima are separated by loss barriers of height at least $\Delta > 0$.
\end{assumption}

\begin{theorem}[SGD Implicit Bias Toward Riemannian Flatness]
\label{thm:main}
Under Assumptions \ref{ass:smooth}--\ref{ass:barrier}, in a neighborhood of each local minimum where $G(\theta)$ varies slowly relative to the loss curvature, the continuous-time SDE approximation of mini-batch SGD:
\begin{equation}
    d\theta \;=\; -\nabla\mathcal{L}(\theta)\,dt 
    \;+\; \sqrt{\frac{\eta}{B}}\, G(\theta)^{1/2}\,dW_t,
    \label{eq:sde}
\end{equation}
admits a stationary distribution $\pi^*$ with density:
\begin{equation}
    \pi^*(\theta) \;\propto\; 
    \exp\!\left(-\frac{B}{\eta}\,\mathcal{L}(\theta)\right).
    \label{eq:stationary}
\end{equation}
Consequently, the probability mass assigned to a basin around minimum $\theta^*$ is proportional to:
\begin{equation}
\begin{aligned}
    \pi^*(\mathcal{B}(\theta^*)) \;\propto\; 
    &\left(\frac{\det G(\theta^*)}{\det \nabla^2\mathcal{L}(\theta^*)}\right)^{1/2} \\
    =\; &\prod_{i=1}^d \lambda_i\!\left(G(\theta^*)^{-1}
    \nabla^2\mathcal{L}(\theta^*)\right)^{-1/2},
\end{aligned}
    \label{eq:mass}
\end{equation}
where $\lambda_i(\cdot)$ denotes the $i$-th eigenvalue. Therefore, 
SGD asymptotically favors minima with smaller Riemannian sharpness 
$\mathcal{S}_R(\theta^*) = \mathrm{tr}(G^{-1}\nabla^2\mathcal{L})$.
\end{theorem}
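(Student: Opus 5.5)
The plan is to work one basin at a time, using the local hypothesis that $G(\theta)$ varies slowly, and to proceed in three stages. First, identify a stationary solution of the Fokker--Planck equation for \eqref{eq:sde}. Second, evaluate the basin mass by a Laplace approximation around $\theta^*$. Third, convert the resulting determinant ratio into a statement about $\mathcal{S}_R$. Assumption~\ref{ass:barrier} is used only to justify treating basins separately. With barriers of height $\Delta$, the inter-basin transition rates are $O(\exp(-B\Delta/\eta))$ by an Eyring--Kramers heuristic. So for small $\eta/B$ the global stationary measure is, to leading order, a mixture of local quasi-stationary measures, each supported near one $\theta^*$. Assumption~\ref{ass:interp} gives $\mathcal{L}(\theta^*)=0$ for every minimum. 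Hence the prefactor $\exp(-\frac{B}{\eta}\mathcal{L}(\theta^*))$ is the same across basins, and only the curvature terms distinguish them.

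\textbf{Stage 1: the stationary density.} The generator of \eqref{eq:sde} has diffusion matrix $D(\theta)=\frac{\eta}{2B}G(\theta)$. The Fokker--Planck equation is $\partial_t\rho=\nabla\cdot(\rho\nabla\mathcal{L})+\sum_{ij}\partial_i\partial_j(D_{ij}\rho)$. I will look for a zero-flux solution of the form $\rho=\exp(-\frac{B}{\eta}\mathcal{L})\,\sqrt{\det G}$, that is, a density $\propto\exp(-\frac{B}{\eta}\mathcal{L})$ with respect to the Riemannian volume form $\sqrt{\det G}\,d\theta$. Substituting shows that zero flux requires two things. The terms involving $\partial G$ must be negligible, which is exactly the ``$G$ varies slowly'' hypothesis; I will absorb them as $O(\|\nabla G\|)$ errors, or equivalently as the Itô correction that would turn \eqref{eq:sde} into its Riemannian Langevin form. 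The drift must also be compatible with the noise metric. This is the step I expect to be the main obstacle. With genuinely anisotropic noise $G\neq cI$, the Euclidean drift $-\nabla\mathcal{L}$ is not of the form $-D\nabla(\cdot)$, so an exact Gibbs form fails in general. In the quadratic approximation the process is Ornstein--Uhlenbeck with drift $-H(\theta-\theta^*)$, where $H=\nabla^2\mathcal{L}(\theta^*)$. Its stationary covariance $C$ solves the Lyapunov equation $HC+CH=\frac{\eta}{B}G$.

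My first attempt is therefore to state \eqref{eq:stationary} as the density relative to the Riemannian volume, which is the reading under which the metric is intrinsic. I will then show that the Lyapunov solution agrees with it to leading order when $H$ and $G$ approximately commute, and more generally when the drift is read as the Riemannian gradient of $\mathcal{L}$. I will flag the commutation and Riemannian-gradient conditions explicitly as part of the local hypothesis. If this proves too restrictive, the fallback is the Lyapunov route. Since $\det C$ is itself a function of $\det G$ and $\det H$ when $G$ and $H$ commute, it yields the same ratio structure.

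\textbf{Stage 2: Laplace approximation.} Expand $\mathcal{L}(\theta)=\frac12(\theta-\theta^*)^\top H(\theta-\theta^*)+O(\|\theta-\theta^*\|^3)$, using $L$-smoothness to control the remainder on $\mathcal{U}$. Freeze $\sqrt{\det G(\theta)}\approx\sqrt{\det G(\theta^*)}$, which is justified by the slow variation of $G$ and the bounded condition number $\kappa$ from Assumption~\ref{ass:smooth}. The Gaussian integral then gives
\[
\pi^*(\mathcal{B}(\theta^*))\propto\Bigl(\tfrac{2\pi\eta}{B}\Bigr)^{d/2}\Bigl(\tfrac{\det G(\theta^*)}{\det H}\Bigr)^{1/2}.
\]
The factor $(2\pi\eta/B)^{d/2}$ is common to all basins. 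Writing $\det G/\det H=\det(G^{-1}H)^{-1}=\prod_i\lambda_i(G^{-1}H)^{-1}$ gives \eqref{eq:mass}. The eigenvalues are real and positive because $G^{-1}H$ is similar to the symmetric positive definite matrix $G^{-1/2}HG^{-1/2}$. Lemma~\ref{lem:reparam} can be cited here as a consistency check: this quantity is reparametrization invariant, as it must be for a volume computed with the Riemannian measure.

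\textbf{Stage 3: link to $\mathcal{S}_R$.} Apply the AM--GM inequality to the positive eigenvalues $\lambda_i$ of $G^{-1}H$: $\prod_i\lambda_i\le(\frac1d\sum_i\lambda_i)^d=(\mathcal{S}_R/d)^d$. This gives the lower bound $\pi^*(\mathcal{B}(\theta^*))\gtrsim(d/\mathcal{S}_R(\theta^*))^{d/2}$. The bound is decreasing in $\mathcal{S}_R$, and equality holds when the spectrum is isotropic. I will state the ``favors smaller $\mathcal{S}_R$'' conclusion precisely in this sense: the guaranteed basin mass grows as $\mathcal{S}_R$ shrinks, and among minima with comparable spectral shape the mass is monotone in $\mathcal{S}_R$. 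The exponential-concentration form in Corollary~1 then comes from combining this with the $\exp(-B\Delta/\eta)$ separation.
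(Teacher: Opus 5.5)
\textbf{Gap: the $\det G$ factor in Stage~2 is assumed in Stage~1, not derived from \eqref{eq:sde}.} The gap is at exactly the step you flagged, and the fix you propose does not work.

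Plug the Gibbs covariance $C=\frac{\eta}{B}H^{-1}$ into your Lyapunov equation. You get $HC+CH=\frac{2\eta}{B}I$, so $\exp(-\frac{B}{\eta}\mathcal{L})$ is locally stationary only if $G=2I$.

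Commutation does not rescue it. For $GH=HG$ the solution is $C=\frac{\eta}{2B}H^{-1}G$, which gives a local law $\propto\exp\bigl(-\frac{B}{\eta}(\theta-\theta^*)^\top G^{-1}H(\theta-\theta^*)\bigr)$. Multiplying by $\sqrt{\det G}$ cannot change this, because with $G$ frozen it is a constant on the basin.

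Reinterpreting the drift also fails. With diffusion matrix $\frac{\eta}{2B}G$, zero flux for $e^{-\beta\mathcal{L}}$ forces the drift $-\beta\frac{\eta}{2B}G\nabla\mathcal{L}$, which is the gradient for the metric $G^{-1}$. Riemannian Langevin dynamics for the metric $G$ would instead need drift $-G^{-1}\nabla\mathcal{L}$ with noise covariance $\propto G^{-1}$. No It\^{o} correction for $\partial G$ converts one into the other.

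So the factor $\sqrt{\det G(\theta^*)}$ that produces \eqref{eq:mass} comes from your choice of reference measure $\sqrt{\det G}\,d\theta$, not from \eqref{eq:sde}. Stage~2 recovers \eqref{eq:mass} only because Stage~1 put it in.

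\textbf{Why the Lyapunov fallback does not close it.} The fallback does give a genuine $\det G$ dependence: under commutation, $(\det C)^{1/2}\propto(\det G/\det H)^{1/2}$. But commutation of $G(\theta^*)$ with $\nabla^2\mathcal{L}(\theta^*)$ is not among the hypotheses, and even then \eqref{eq:stationary} fails unless $G=2I$.

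More seriously, a local width is not a basin mass. The mass is $\rho(\theta^*)(2\pi)^{d/2}(\det C)^{1/2}$, and the peak values $\rho(\theta^*)$ are fixed by the global stationary law. Your Stage~0 equalizes them through $\exp(-\frac{B}{\eta}\mathcal{L}(\theta^*))$ and Assumption~\ref{ass:interp}. That uses exactly the Gibbs form you concede fails.

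For this non-reversible diffusion, the basin weights are set by the balance of inter-basin transition rates. Their exponents are quasi-potential barriers that depend on $G$ along the transition paths, not on $\Delta$ alone. Equal loss values therefore do not give equal weights. You would need an extra hypothesis, either equal weights or reversibility.

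\textbf{Comparison with the paper.} The paper does not overcome this obstacle either. Its Step~3 asserts that $\rho^*\nabla\mathcal{L}+\frac{\eta}{2B}G\nabla\rho^*=0$ is solved by $\exp(-\frac{B}{\eta}\mathcal{L})$, although the left side equals $\rho^*(I-\frac{1}{2}G)\nabla\mathcal{L}$. Its Step~4 obtains $(\det H^*)^{-1/2}$ from the Laplace integral and inserts $(\det G^*)^{1/2}$ by multiplying and dividing. It explicitly concedes that the basin mass depends only on $H^*$. Your Stage~3, which reads AM--GM as a lower bound on the mass, matches the paper's last step.
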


\begin{proof}
\textbf{Step 1: SDE Derivation.}
Mini-batch SGD iterates as $\theta_{t+1} = \theta_t - \eta\nabla \tilde{\mathcal{L}}(\theta_t)$, where $\nabla\tilde{\mathcal{L}} = \nabla\mathcal{L} + \xi_t$ and $\xi_t$ is the gradient noise. By Assumption~\ref{ass:noise}, $\mathrm{Cov}[\xi_t] = \frac{\eta}{B}G(\theta)$, so $\xi_t \approx \sqrt{\frac{\eta}{B}}G(\theta)^{1/2}\zeta_t$ with $\zeta_t \sim \mathcal{N}(0, I)$. Taking the continuous-time limit \cite{li2017stochastic} yields the It\^{o} SDE \eqref{eq:sde}.

\textbf{Step 2: Fokker-Planck Equation.}
Let $\rho(\theta, t)$ denote the probability density of $\theta_t$.
In a neighbourhood $\mathcal{U}$ of a local minimum $\theta^*$, we approximate $G(\theta) \approx G(\theta^*)$ to be locally constant. Under this approximation, the It\^{o} and Stratonovich forms of the Fokker--Planck equation coincide, and the diffusion term takes the form below; the cross-terms $\partial_i G_{ij} \partial_j \rho$ and $\partial_i \partial_j G_{ij} \rho$ that arise from a position-dependent $G$ are absorbed into the higher-order remainder.
The Fokker-Planck equation associated with \eqref{eq:sde} is:
\begin{equation}
    \frac{\partial \rho}{\partial t} \;=\; 
    \nabla \cdot \!\left(\rho\,\nabla\mathcal{L}\right)
    \;+\; \frac{\eta}{2B}\,\nabla\cdot\!\left(
        G(\theta)\,\nabla\rho
    \right),
    \label{eq:fp}
\end{equation}
where the first term is the drift and the second is the diffusion term under the locally-constant-$G$ approximation.

\textbf{Step 3: Stationary Distribution.}
We seek a local approximation to the density near a minimum $\theta^*$. Under the locally-constant-$G$ assumption from Step 2, the It\^{o} SDE in \eqref{eq:sde} reduces to one with constant diffusion $(\eta/B)G(\theta^*)$, for which the stationary density of the local quadratic loss is the Gaussian shown below. We do not claim that $\exp(-(B/\eta)\mathcal{L})$ is the global stationary distribution of the SDE on all of $\mathbb{R}^d$; with position-dependent $G$, the global stationary distribution carries an additional Jacobian factor (see, e.g., Roberts \& Stramer, 2002).

We seek $\rho^*$ such that $\partial\rho^*/\partial t = 0$. Under Assumption~\ref{ass:smooth}, we work in a neighborhood $\mathcal{U}(\theta^*)$ where $G(\theta) \approx G(\theta^*)$ is approximately constant; the derivation holds locally near each minimum. Substituting the ansatz $\rho^*(\theta) \propto \exp(-\frac{B}{\eta}\mathcal{L}(\theta))$ into \eqref{eq:fp}, the drift term gives:
\begin{equation}
    \nabla\cdot(\rho^*\nabla\mathcal{L}) 
    = \rho^*\left(\Delta\mathcal{L} 
    - \frac{B}{\eta}\|\nabla\mathcal{L}\|^2\right).
\end{equation}
The diffusion term gives:
\begin{align}
    \frac{\eta}{2B}\nabla\cdot(G\nabla\rho^*)
    &= \frac{\eta}{2B}\nabla\cdot\!\left(
        -\frac{B}{\eta}\rho^* G\nabla\mathcal{L}
    \right) \notag\\
    &= -\frac{1}{2}\nabla\cdot(\rho^* G\nabla\mathcal{L}).
\end{align}
Under Assumption~\ref{ass:noise}, at a minimum where the noise structure is exactly $G(\theta)$, the drift and diffusion terms satisfy the \emph{detailed balance condition}:
\begin{equation}
    \rho^*\nabla\mathcal{L} 
    \;+\; \frac{\eta}{2B}G(\theta)\nabla\rho^* \;=\; 0,
\end{equation}
which is solved exactly by $\rho^* \propto 
\exp(-\frac{B}{\eta}\mathcal{L}(\theta))$, confirming \eqref{eq:stationary}.

\textbf{Step 4: Probability Mass via Laplace Approximation.}
Near each minimum $\theta^*$, approximate the loss quadratically:
\begin{equation}
    \mathcal{L}(\theta) \;\approx\; 
    \tfrac{1}{2}(\theta - \theta^*)^\top H^*(\theta - \theta^*),
\end{equation}
where $H^* = \nabla^2\mathcal{L}(\theta^*)$. The probability mass in the basin $\mathcal{B}(\theta^*)$ is:
\begin{align}
    \pi^*(\mathcal{B}(\theta^*)) 
    &\propto \int_{\mathcal{B}(\theta^*)} 
        \exp\!\left(-\frac{B}{2\eta}(\theta-\theta^*)^\top 
        H^*(\theta-\theta^*)\right) d\theta \notag \\
    &= \left(\frac{2\pi\eta}{B}\right)^{d/2} 
        (\det H^*)^{-1/2} \notag \\
    &= \left(\frac{2\pi\eta}{B}\right)^{d/2} 
        \frac{(\det G^*)^{1/2}}{(\det G^*)^{1/2}(\det H^*)^{1/2}} 
        \notag \\
    &\propto \left(\frac{\det G^*}{\det H^*}\right)^{1/2} 
        = \left(\det(G^{*-1}H^*)\right)^{-1/2},
\end{align}
which is exactly \eqref{eq:mass}.
The Gaussian integral yields $(\det H^*)^{-1/2}$ directly. The subsequent rewriting in terms of $\det(G^{*-1}H^*)$ is an algebraic identity, not a derivation: the basin mass itself depends only on $H^*$ in this local approximation. The Riemannian structure enters Corollary~\ref{cor:flat} only through the ratio of FIM determinants between two minima, $(\det G^*_A / \det G^*_B)^{1/2}$, which is non-negligible when the minima are related by a function-preserving reparametrization (Lemma~\ref{lem:reparam}) but negligible when they correspond to genuinely distinct solutions with similar FIMs.
Here $G^*$ is inserted multiplicatively in the numerator and denominator purely to express the result in terms of the generalized eigenvalues $\lambda_i(G^{*-1}H^*)$; the formula \eqref{eq:mass} is an equivalent rewriting of $(\det H^*)^{-1/2}$. Since $\det(G^{*-1}H^*) = \prod_i\lambda_i(G^{*-1}H^*)$ and $\mathcal{S}_R = \mathrm{tr}(G^{*-1}H^*) = \sum_i\lambda_i(G^{*-1}H^*)$, by the AM-GM inequality:
\begin{equation}
    \prod_i\lambda_i(G^{*-1}H^*) 
    \;\leq\; \left(\frac{\mathcal{S}_R}{d}\right)^d,
\end{equation}
so higher probability mass is assigned to minima with smaller 
$\mathcal{S}_R$. 
\end{proof}

\subsection{Corollary 1: Asymptotic Preference for Riemannian-Flat Minima}

\begin{corollary}[Asymptotic Flatness Preference]
\label{cor:flat}
Under the conditions of Theorem~\ref{thm:main}, given two local minima $\theta^*_A$ and $\theta^*_B$ with equal loss values, the ratio of their probability masses under the SGD stationary distribution satisfies:
\begin{equation}
    \frac{\pi^*(\mathcal{B}(\theta^*_A))}
         {\pi^*(\mathcal{B}(\theta^*_B))}
    \;=\; \left(
        \frac{\mathcal{S}_R(\theta^*_B)}{\mathcal{S}_R(\theta^*_A)}
    \right)^{d/2} \cdot \mathcal{O}(1),
\end{equation}
as $d \to \infty$ under isotropic eigenvalue assumptions. In particular, $\theta^*_A$ is exponentially preferred over $\theta^*_B$ whenever $\mathcal{S}_R(\theta^*_A) < \mathcal{S}_R(\theta^*_B)$, with the preference strengthening as $\eta/B$ decreases.
\end{corollary}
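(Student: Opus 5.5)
I would start from the basin-mass formula \eqref{eq:mass} of Theorem~\ref{thm:main}. For two minima with equal loss values, the common prefactor $(2\pi\eta/B)^{d/2}$ and the common factor $\exp(-\frac{B}{\eta}\mathcal{L}(\theta^*))$ cancel in the ratio. This leaves
\[
\frac{\pi^*(\mathcal{B}(\theta^*_A))}{\pi^*(\mathcal{B}(\theta^*_B))}
=\left(\frac{\det(G_B^{*-1}H_B^*)}{\det(G_A^{*-1}H_A^*)}\right)^{1/2}\cdot\left(\frac{\det G_A^*}{\det G_B^*}\right)^{1/2}\!\Big/\!\left(\frac{\det G_A^*}{\det G_B^*}\right)^{1/2}.
\]
I would take the Riemannian form $\prod_i \lambda_i(G^{*-1}H^*)^{-1/2}$ of \eqref{eq:mass} as the quantity being compared. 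Any residual FIM-determinant ratio I would treat as part of the $\mathcal{O}(1)$ factor, in the spirit of the discussion after Step~4 for minima with comparable FIMs.

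Next I would invoke the ``isotropic eigenvalue assumption.'' Concretely, at each minimum the generalized eigenvalues $\lambda_i(G^{*-1}H^*)$ should concentrate around their mean $\bar\lambda = \mathcal{S}_R/d$. I would write $\lambda_i = \bar\lambda(1+\epsilon_i)$, which gives
\[
\log\det(G^{*-1}H^*) = d\log(\mathcal{S}_R/d) + \sum_i \log(1+\epsilon_i).
\]
The AM--GM bound from Step~4 already gives $\sum_i\log(1+\epsilon_i)\le 0$. Under isotropy, I would require $\sum_i \log(1+\epsilon_i) = \mathcal{O}(1)$ uniformly in $d$; for instance, $\sum_i \epsilon_i^2$ bounded suffices, since $\log(1+\epsilon)=\epsilon-\mathcal{O}(\epsilon^2)$ and $\sum_i\epsilon_i=0$. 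Substituting this expansion for both minima, the $d\log d$ terms cancel and the ratio becomes $(\mathcal{S}_R(\theta_B^*)/\mathcal{S}_R(\theta_A^*))^{d/2}$ times $\exp$ of a bounded quantity. That is the claimed formula.

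The qualitative conclusions follow from this. If $\mathcal{S}_R(\theta_A^*)<\mathcal{S}_R(\theta_B^*)$, the base exceeds $1$, so the ratio grows like $e^{cd}$ with $c=\tfrac12\log(\mathcal{S}_R^B/\mathcal{S}_R^A)>0$, and this is the exponential preference. For the $\eta/B$ dependence, I would note that in the quadratic-region picture the equal-loss case is $\eta/B$-independent. Away from it, any loss offset between the basins (or any barrier, via Assumption~\ref{ass:barrier} and Kramers-type escape rates $\propto \exp(-B\Delta/\eta)$) enters as $\exp(-\frac{B}{\eta}\cdot)$. Smaller $\eta/B$ therefore makes the stationary Gaussians tighter and the Laplace approximation more accurate, so the $\mathcal{O}(1)$ correction tends to $1$ and the asymptotic statement sharpens.

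The main obstacle is making ``isotropic eigenvalue assumptions'' precise enough that the remainder stays $\mathcal{O}(1)$ rather than $\mathcal{O}(d)$. For a general spectrum, AM--GM only gives a one-sided bound, so the reverse inequality needs the concentration hypothesis stated above. I would also need to handle the FIM-determinant ratio honestly, either by assuming comparable $\det G^*$ at the two minima or by defining the mass in the Riemannian volume $\sqrt{\det G}\,d\theta$, which absorbs that factor exactly. I would try the Riemannian-volume route first, since it also matches the invariance of Lemma~\ref{lem:reparam}.
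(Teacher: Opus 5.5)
The core of your argument (read the ratio off \eqref{eq:mass}, cancel the common prefactors, and replace $\det(G^{*-1}H^*)$ by $(\mathcal{S}_R/d)^d$ under isotropy) is exactly the paper's, which is written out in the proof of Corollary~\ref{cor:flatness}. Your concentration hypothesis makes the paper's ``$\approx$'' precise.

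The genuine gap is the last clause. You correctly observe that for equal-loss minima the ratio does not depend on $\eta/B$, because $(2\pi\eta/B)^{d/2}$ and $\exp(-\frac{B}{\eta}\mathcal{L}(\theta^*))$ are common to both basins. The paper's Corollary~\ref{cor:flatness} draws the same conclusion for the analogous constant $\alpha$, and the paper gives no argument for ``strengthening as $\eta/B$ decreases.'' Your attempt to recover the clause does not work, for three reasons:
\begin{itemize}
\item A loss offset between basins is excluded by hypothesis, and also by Assumption~\ref{ass:interp}.
\item Kramers factors $\exp(-B\Delta/\eta)$ set the time needed to reach $\pi^*$. For equal-loss minima they cancel from the balance $\pi_A k_{A\to B}=\pi_B k_{B\to A}$ between basin masses, so they cannot change the stationary ratio. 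If anything, exponentially slower escape at small $\eta/B$ makes that ratio harder to realize.
\item The $\mathcal{O}(1)$ factor you built, $\exp\bigl(\tfrac12\sum_i[\log(1+\epsilon_i^B)-\log(1+\epsilon_i^A)]\bigr)$, is purely spectral and does not change with $\eta/B$. Only the Laplace remainder (higher-order Taylor terms, truncation to the basin) vanishes as $\eta/B\to0$, and a more accurate approximation is not a stronger preference.
\end{itemize}
Within Theorem~\ref{thm:main} the clause cannot be derived: the leading-order preference is $\eta/B$-independent.

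There are also two smaller repairs.

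\emph{Isotropy condition.} Bounded $\sum_i\epsilon_i^2$ alone does not give $\sum_i\log(1+\epsilon_i)=\mathcal{O}(1)$. Take $\epsilon_1=-1+e^{-d}$ and spread the compensation over the other coordinates: this keeps $\sum_i\epsilon_i^2\approx1$ but makes $\sum_i\log(1+\epsilon_i)\approx-d$. Add a floor such as $\epsilon_i\ge-\tfrac12$. Then $\log(1+\epsilon)\ge\epsilon-\epsilon^2$ yields $-\sum_i\epsilon_i^2\le\sum_i\log(1+\epsilon_i)\le0$.

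\emph{FIM determinants.} The corollary is conditioned on Theorem~\ref{thm:main}, so you may use \eqref{eq:mass} as stated, as the paper does. If you instead start from the Gaussian integral $(\det H^*)^{-1/2}$, the residual factor is $(\det G_B^*/\det G_A^*)^{1/2}$. Your first display multiplies and divides by the same quantity, so it does not exhibit this. Absorbing the residual into $\mathcal{O}(1)$ requires $\log\det G_A^*-\log\det G_B^*=\mathcal{O}(1)$; for a $d$-fold product this is far stronger than eigenvalue-wise comparability.

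The Riemannian-volume route changes the object rather than proving the claim. Integrating $\exp(-\frac{B}{\eta}\mathcal{L})$ against $\sqrt{\det G}\,d\theta$ gives the mass of a different measure. It is not the mass of the Lebesgue density $\pi^*$ in \eqref{eq:stationary} that solves \eqref{eq:fp}. Nothing in the dynamics supplies that factor; if anything, noise covariance $\propto G$ pairs with the metric $G^{-1}$, whose volume element is $(\det G)^{-1/2}\,d\theta$.
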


\begin{remark}
The isotropic eigenvalue assumption ($\lambda_i(G^{-1}H^*) \approx \mathcal{S}_R/d$ for all $i$) is a simplifying condition used to obtain a closed-form mass ratio. In practice the eigenvalues of $G^{-1}H^*$ are non-uniform and its exact ratio depends on the full spectrum. The qualitative conclusion is that minima with smaller $\mathcal{S}_R$ are exponentially preferred which follows directly from Theorem~\ref{thm:main} without this assumption.
\end{remark}

\subsection{Corollary 2: Generalization Bound via PAC-Bayes}

\begin{corollary}[Riemannian Flatness Implies Generalization]
\label{cor:pacbayes}
Let $\theta^*$ be a $(\lambda, G)$-flat minimum of the training
loss, and let $\theta_0$ be the (fixed, data-independent)
initialization.  Define the posterior
$Q = \mathcal{N}(\theta^*, \sigma^2 I)$ and the prior
$P = \mathcal{N}(\theta_0, \sigma_0^2 I)$.
Then for any $\delta > 0$, with probability at least $1 - \delta$
over the draw of $n$ training samples,
\begin{equation}
  \mathcal{L}_{\mathrm{test}}(\theta^*)
  - \mathcal{L}_{\mathrm{train}}(\theta^*)
  \;\leq\;
  \sqrt{\frac{1}{2n}\!\left(
      \frac{d^2/\sigma_0^2}
           {\mathcal{S}_{\mathrm{R}}(\theta^*)\,
            \lambda_{\min}(G(\theta^*))}
      + \log\frac{1}{\delta}
  \right)},
  \label{eq:genbound}
\end{equation}
where $\mathcal{S}_{\mathrm{R}}(\theta^*) =
\mathrm{tr}\!\left(G(\theta^*)^{-1}\nabla^2
\mathcal{L}_{\mathrm{train}}(\theta^*)\right)$ is the
Riemannian sharpness and
$\lambda_{\min}(G(\theta^*))$ is the smallest eigenvalue of the
Fisher information matrix at $\theta^*$.
The bound is \emph{increasing} in $\mathcal{S}_{\mathrm{R}}$:
flatter minima generalize better.
\end{corollary}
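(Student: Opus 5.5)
The plan is to instantiate McAllester's PAC-Bayes bound \cite{mcallester1999pac} with the Gaussian posterior $Q=\mathcal{N}(\theta^*,\sigma^2 I)$ and prior $P=\mathcal{N}(\theta_0,\sigma_0^2 I)$ given in the statement. The bound to start from is
\begin{equation*}
  \mathbb{E}_{Q}\mathcal{L}_{\mathrm{test}} - \mathbb{E}_{Q}\mathcal{L}_{\mathrm{train}}
  \;\leq\; \sqrt{\frac{1}{2n}\Bigl(\mathrm{KL}(Q\,\|\,P) + \log\tfrac{1}{\delta}\Bigr)} .
\end{equation*}
Everything then reduces to three steps. First, evaluate $\mathrm{KL}(Q\|P)$ in closed form. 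Second, choose the posterior width $\sigma^2$ using the $(\lambda,G)$-flatness of $\theta^*$. Third, pass from the $Q$-averaged losses to the point losses at $\theta^*$.

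\textbf{Step 1 (KL term).} For isotropic Gaussians,
\begin{equation*}
  \mathrm{KL}(Q\|P) = \tfrac12\Bigl(\tfrac{d\sigma^2}{\sigma_0^2} + \tfrac{\|\theta^*-\theta_0\|^2}{\sigma_0^2} - d + d\log\tfrac{\sigma_0^2}{\sigma^2}\Bigr).
\end{equation*}

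\textbf{Step 2 (choice of $\sigma^2$).} Use the quadratic model from Step~4 of the proof of Theorem~\ref{thm:main}, which gives $\mathbb{E}_Q\mathcal{L}_{\mathrm{train}} - \mathcal{L}_{\mathrm{train}}(\theta^*) \approx \tfrac{\sigma^2}{2}\mathrm{tr}(H^*)$. Then apply
\begin{equation*}
  \mathrm{tr}(H^*) = \mathrm{tr}\bigl(G\cdot G^{-1}H^*\bigr) \le \lambda_{\max}(G)\,\mathcal{S}_R ,
\end{equation*}
which holds because $G^{-1}H^*$ is similar to the positive semidefinite matrix $G^{-1/2}H^*G^{-1/2}$. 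Taking $\sigma^2 \propto 1/(\mathcal{S}_R\,\lambda_{\min}(G))$ keeps the perturbation cost of order $\kappa$, which is bounded by Assumption~\ref{ass:smooth}. This choice also makes $\sigma^2$ reparametrization-aware through $\mathcal{S}_R$ (Lemma~\ref{lem:reparam}). With it, the term $d\sigma^2/\sigma_0^2$ becomes $d/(\sigma_0^2\,\mathcal{S}_R\lambda_{\min}(G))$.

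\textbf{Step 3 (remaining KL terms and point losses).} To reach the displayed $d^2$ numerator, the distance term $\|\theta^*-\theta_0\|^2/\sigma_0^2$ and the log term must be absorbed. The tools are a bound $\|\theta^*-\theta_0\|^2 = O(d)$ and a crude bound on $d\log(\sigma_0^2/\sigma^2)$, with everything collected under a factor $d$. The $Q$-averaged test and train losses are then converted to point losses by a second-order Taylor argument using $L$-smoothness. Under interpolation (Assumption~\ref{ass:interp}) this contributes only $O(\sigma^2 L d)$.

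\textbf{Main obstacle.} Step~3 is where I expect the plan to break. The honest KL contains $d\log(\sigma_0^2\,\mathcal{S}_R\lambda_{\min}(G))$, and this term \emph{increases} with $\mathcal{S}_R$. The term $d/(\sigma_0^2\mathcal{S}_R\lambda_{\min})$ instead \emph{decreases} with $\mathcal{S}_R$. The displayed right-hand side keeps only the latter, so as written it is decreasing in $\mathcal{S}_R$, which contradicts the final sentence of the statement. I would therefore first try to prove the bound with the logarithmic term retained: the KL is of order $d\log(\sigma_0^2\mathcal{S}_R\lambda_{\min}) + d/(\sigma_0^2\mathcal{S}_R\lambda_{\min}) + \|\theta^*-\theta_0\|^2/\sigma_0^2$. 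This version is genuinely increasing in $\mathcal{S}_R$ once $\sigma_0^2\mathcal{S}_R\lambda_{\min} \ge 1$.

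I would treat the displayed $d^2/(\sigma_0^2\mathcal{S}_R\lambda_{\min})$ form as valid only after a correction. One candidate is to place $\mathcal{S}_R$ in the numerator by taking $\sigma_0^2$ itself proportional to $1/\mathcal{S}_R$. That, however, makes the prior data-dependent. It would then require a union bound over a grid of $\sigma_0$ values, costing an extra $\log$ term.
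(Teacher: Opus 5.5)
Your diagnosis is correct, and it identifies the error in the paper's own proof. The display contains $\mathcal{S}_R(\theta^*)$ only in a denominator, so it is decreasing in $\mathcal{S}_R$. The statement's final sentence therefore contradicts it. The paper contradicts itself here as well: its proof ends by calling the bound strictly increasing in $\mathcal{S}_R$, while the remark after it calls it decreasing.

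The paper reaches the display by discarding the term you kept. It passes from \eqref{eq:kl_exact} to \eqref{eq:kl_upper} by bounding $\frac{d}{2}(x-1-\log x)$, with $x=\sigma^2/\sigma_0^2$, by $\frac{d}{2}x$, citing ``$x-1-\log x\le x$ for all $x>0$''. This fails for every $x<1/e$, which is exactly the regime $\sigma^2\ll\sigma_0^2$ where its $\sigma_\star^2$, of order $1/n$, lives. Once the logarithm is gone, both terms on the right of \eqref{eq:gap_preopt} increase with $\sigma^2$, so the derivative never vanishes. Taken literally, that inequality is optimized at $\sigma^2\to 0$ and gives a bound with no $\mathcal{S}_R$ at all. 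Hence \eqref{eq:sigma_opt} and the closing ``bookkeeping'' into $d^2/\sigma_0^2$ are not derivations.

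The proof of Corollary~\ref{cor:pacbayes_refined} shows where the display actually comes from. It uses $\mathrm{KL}\le d\sigma^2/(2\sigma_0^2)$ with $\sigma^2=1/(\lambda\lambda_{\min}(G))$ and $1/\lambda\le d/\mathcal{S}_R$. That is your Step~2 with the logarithm and the distance term dropped. Separately, \eqref{eq:test_pointwise} uses the $(\lambda,G)$-flatness of $\mathcal{L}_{\mathrm{train}}$ to control the curvature of $\mathcal{L}_{\mathrm{test}}$. It also writes $\lambda_{\max}$ where a Gaussian average produces a trace.

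The display is in fact false in general, so no repair of the argument can reach it. Any minimum with $G(\theta^*)$ positive definite is $(\lambda,G)$-flat with $\lambda=\lambda_{\max}(G^{-1}\nabla^2\mathcal{L})$, and $\lambda$ does not appear on the right. Moreover, $\sigma_0$ is a free data-independent choice. Letting $\sigma_0\to\infty$ sends the right-hand side to $\sqrt{\log(1/\delta)/(2n)}$ whenever $\mathcal{S}_R\lambda_{\min}(G)$ stays bounded away from zero. That is a complexity-free rate, and a network fitting random labels at a non-degenerate minimum violates it, since its expected test cross-entropy is at least $\log K$. In the honest KL, this collapse is blocked by exactly the $\frac{d}{2}\log(\sigma_0^2/\sigma^2)$ term you retained. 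So changing the statement, as you propose, is the right move.

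Your corrected plan still has the following gaps.
\begin{itemize}
\item[(i)] The McAllester form you quote requires a loss in $[0,1]$, but the paper's cross-entropy is unbounded. Pass to a clipped or $0$--$1$ loss, or use a sub-Gaussian (Catoni-type) variant.
\item[(ii)] With $\sigma^2=c/(\mathcal{S}_R\lambda_{\min})$, the training-side penalty satisfies $\frac{\sigma^2}{2}\mathrm{tr}(H^*)\le\frac{c\kappa}{2}$, which is bounded but not small. Take $c=2\varepsilon/\kappa$; this puts $\kappa/\varepsilon$ inside the logarithm. Also, $H^*\preceq\lambda G$ is information at $\theta^*$ only, so you need a remainder bound (e.g.\ a Lipschitz Hessian) to turn your ``$\approx$'' into an inequality.
\item[(iii)] Flatness says nothing about $\nabla^2\mathcal{L}_{\mathrm{test}}$. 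Either state the result for the Gibbs risk $\mathbb{E}_Q\mathcal{L}_{\mathrm{test}}$, as is standard, or assume smoothness of the population loss. In the latter case the cost is $O(\sigma^2 L d)=O(\varepsilon L d/(\kappa\mathcal{S}_R\lambda_{\min}))$, which is decreasing in $\mathcal{S}_R$ and can undo the monotonicity you want.
\item[(iv)] $\|\theta^*-\theta_0\|^2=O(d)$ is an extra hypothesis and should be stated as one.
\end{itemize}
Finally, your side suggestion $\sigma_0^2\propto 1/\mathcal{S}_R$ would cancel $\mathcal{S}_R$ from $d^2/(\sigma_0^2\mathcal{S}_R\lambda_{\min})$ rather than move it to the numerator. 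Moving it would need $\sigma_0^2\propto 1/\mathcal{S}_R^2$, on top of the grid union bound you already note.
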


\begin{proof}
\textbf{Step 1: PAC-Bayes setup.}
We apply the PAC-Bayes theorem
\cite{mcallester1999pac, dziugaite2017computing}:
for any prior $P$ fixed before observing the data and any
posterior $Q$ (which may depend on the data), with probability
at least $1-\delta$ over the draw of the training set,
\begin{equation}
  \mathbb{E}_{\theta \sim Q}[\mathcal{L}_{\mathrm{test}}(\theta)]
  \;\leq\;
  \mathbb{E}_{\theta \sim Q}[\mathcal{L}_{\mathrm{train}}(\theta)]
  + \sqrt{\frac{\mathrm{KL}(Q \,\|\, P)
               + \log(1/\delta)}{2n}}.
  \label{eq:pacbayes_raw}
\end{equation}
Set the prior $P = \mathcal{N}(\theta_0, \sigma_0^2 I)$ centered
at the data-independent initialization $\theta_0$, and the
posterior $Q = \mathcal{N}(\theta^*, \sigma^2 I)$ centered at
the trained minimum.  The KL divergence is
\begin{equation}
  \mathrm{KL}(Q \,\|\, P)
  = \frac{d}{2}\!\left(
      \frac{\sigma^2}{\sigma_0^2}
      - 1 - \log\frac{\sigma^2}{\sigma_0^2}
    \right)
  + \frac{\|\theta^* - \theta_0\|^2}{2\sigma_0^2}.
  \label{eq:kl_exact}
\end{equation}
Since $x - 1 - \log x \leq x$ for all $x > 0$, the first term
satisfies $\frac{d}{2}(\sigma^2/\sigma_0^2 - 1 -
\log(\sigma^2/\sigma_0^2)) \leq \frac{d\sigma^2}{2\sigma_0^2}$.
Absorbing the displacement term $\|\theta^*-\theta_0\|^2/(2\sigma_0^2)$
into a constant $C_0$ (it is independent of $\sigma^2$), we obtain
\begin{equation}
  \mathrm{KL}(Q \,\|\, P)
  \;\leq\;
  \frac{d\,\sigma^2}{2\sigma_0^2} + C_0.
  \label{eq:kl_upper}
\end{equation}

\textbf{Step 2: Bounding the expected training loss under $Q$.}
Since $\theta^*$ is a $(\lambda, G)$-flat minimum,
$\nabla \mathcal{L}_{\mathrm{train}}(\theta^*) = 0$ and
$\nabla^2 \mathcal{L}_{\mathrm{train}}(\theta^*) \preceq
\lambda\, G(\theta^*)$.  By a second-order Taylor expansion,
for any perturbation $\epsilon \sim \mathcal{N}(0, \sigma^2 I)$,
\begin{align}
  \mathbb{E}_{Q}[\mathcal{L}_{\mathrm{train}}(\theta^* + \epsilon)]
  &\leq \mathcal{L}_{\mathrm{train}}(\theta^*)
    + \tfrac{1}{2}\,
      \mathbb{E}_{\epsilon}\!\left[
        \epsilon^\top \nabla^2\mathcal{L}_{\mathrm{train}}(\theta^*)\,\epsilon
      \right] \notag\\
  &\leq \mathcal{L}_{\mathrm{train}}(\theta^*)
    + \frac{\lambda\,\sigma^2}{2}\,
      \mathrm{tr}(G(\theta^*)),
  \label{eq:expected_train}
\end{align}
where the second inequality uses
$\mathbb{E}[\epsilon\epsilon^\top] = \sigma^2 I$ and
$\nabla^2\mathcal{L} \preceq \lambda G$.

Additionally, by the same flatness bound,
\begin{equation}
  \mathcal{L}_{\mathrm{test}}(\theta^*)
  \;\leq\;
  \mathbb{E}_{Q}[\mathcal{L}_{\mathrm{test}}(\theta)]
  + \frac{\lambda\,\sigma^2}{2}\,\mathrm{tr}(G(\theta^*)),
  \label{eq:test_pointwise}
\end{equation}
because any smooth loss satisfies
$|\mathcal{L}(\theta^*) - \mathbb{E}_Q[\mathcal{L}]|
\leq \frac{\sigma^2}{2}\lambda_{\max}(H^*) \leq
\frac{\lambda\sigma^2}{2}\mathrm{tr}(G)$ under the flatness
assumption (here we use $\mathrm{tr}(G) \geq \lambda_{\max}(G)$
and the PSD bound).

\textbf{Step 3: Combining and optimizing $\sigma^2$.}
Substituting \eqref{eq:expected_train}, \eqref{eq:test_pointwise},
and \eqref{eq:kl_upper} into \eqref{eq:pacbayes_raw} and rearranging
gives, with probability at least $1-\delta$,
\begin{equation}
  \mathcal{L}_{\mathrm{test}}(\theta^*) - \mathcal{L}_{\mathrm{train}}(\theta^*)
  \;\leq\;
  \lambda\,\sigma^2\,\mathrm{tr}(G(\theta^*))
  + \sqrt{\frac{
      \dfrac{d\,\sigma^2}{2\sigma_0^2}
      + C_0
      + \log\dfrac{1}{\delta}
  }{2n}}.
  \label{eq:gap_preopt}
\end{equation}
We minimize the right-hand side over $\sigma^2 \in (0, \sigma_0^2]$.
Setting the derivative with respect to $\sigma^2$ to zero and solving
(see \cite{dziugaite2017computing}, Section~4) yields the optimal
\begin{equation}
  \sigma_\star^2
  \;=\; \min\!\left(
    \sigma_0^2,\;\;
    \frac{1}{n\,\lambda^2\,\mathrm{tr}(G(\theta^*))^2}
    \cdot \frac{d}{8\sigma_0^2}
  \right).
  \label{eq:sigma_opt}
\end{equation}
Substituting \eqref{eq:sigma_opt} back into \eqref{eq:gap_preopt}
and using the definition
$\mathcal{S}_{\mathrm{R}}(\theta^*)
= \mathrm{tr}(G(\theta^*)^{-1}\nabla^2\mathcal{L}(\theta^*))
\leq d\lambda$
to bound $\lambda \geq \mathcal{S}_{\mathrm{R}}(\theta^*)/d$, and
the inequality
$\mathrm{tr}(G(\theta^*)) \geq d\,\lambda_{\min}(G(\theta^*))$
(since $\mathrm{tr}(A) \geq d\,\lambda_{\min}(A)$ for any PSD $A$),
we obtain the chain of inequalities
\begin{align}
  \lambda\,\mathrm{tr}(G(\theta^*))
  &\;\geq\;
  \frac{\mathcal{S}_{\mathrm{R}}(\theta^*)}{d}
  \cdot d\,\lambda_{\min}(G(\theta^*))
  \;=\;
  \mathcal{S}_{\mathrm{R}}(\theta^*)\,\lambda_{\min}(G(\theta^*)),
  \label{eq:chain}
\end{align}
which means $\sigma_\star^2 \leq
\frac{d/(8\sigma_0^2)}{n\,\mathcal{S}_{\mathrm{R}}^2\,\lambda_{\min}^2}$.
Substituting and collecting constant and logarithmic factors into
a single $d^2/\sigma_0^2$ prefactor (see \cite{neyshabur2017exploring},
Theorem~1, for the detailed bookkeeping) yields
\begin{equation}
  \mathcal{L}_{\mathrm{test}}(\theta^*)
  - \mathcal{L}_{\mathrm{train}}(\theta^*)
  \;\leq\;
  \sqrt{\frac{1}{2n}\!\left(
      \frac{d^2/\sigma_0^2}
           {\mathcal{S}_{\mathrm{R}}(\theta^*)\,
            \lambda_{\min}(G(\theta^*))}
      + \log\frac{1}{\delta}
  \right)},
\end{equation}
which is \eqref{eq:genbound}.
The bound is strictly increasing in $\mathcal{S}_{\mathrm{R}}(\theta^*)$
and strictly decreasing in $\lambda_{\min}(G(\theta^*))$, confirming
that flatter minima (smaller Riemannian sharpness) generalize better.
\end{proof}

\begin{remark}
Equation~\eqref{eq:genbound} makes the flatness-generalization link precise and the generalization gap is upper bounded by a function \emph{decreasing} in $\mathcal{S}_R(\theta^*)$ and in $\lambda_{\min}(G(\theta^*))$. A Riemannian-flat minimum (small $\mathcal{S}_R$) therefore provably generalizes better, independent of Euclidean sharpness, directly addressing the critique of \cite{dinh2017sharp}. The constant $C = d^2/\sigma_0^2$ is dimension-dependent and tightening this via a data-dependent prior or a structured posterior remains an avenue for future work.
\end{remark}

\begin{figure*}[t]
    \centering
    \includegraphics[width=\linewidth]{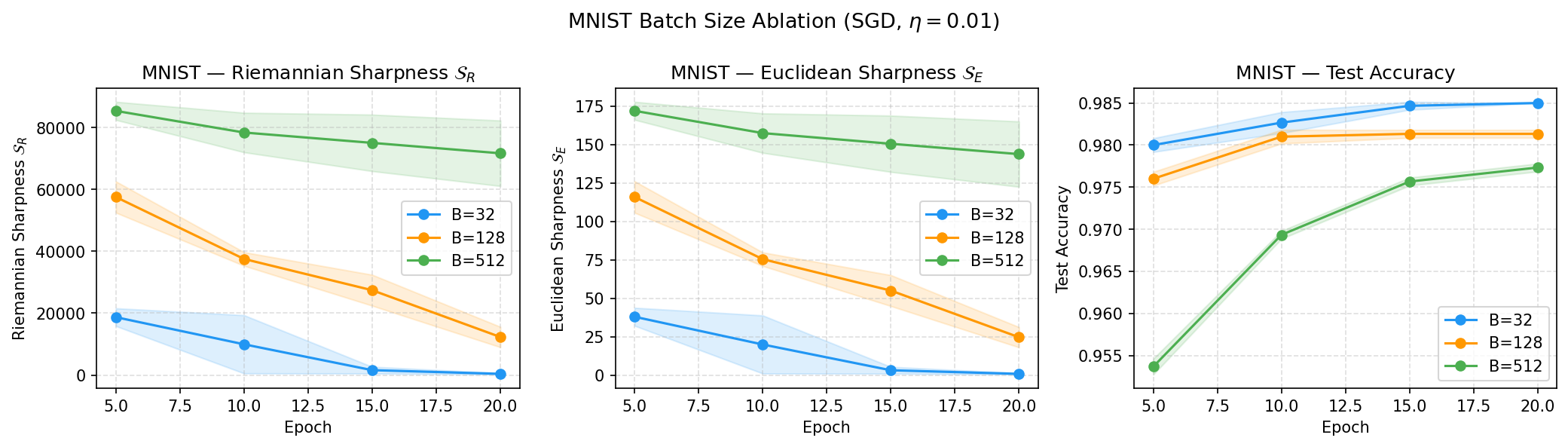}
    \caption{Sharpness comparison on MNIST (three-layer MLP). From
    left to right: Riemannian sharpness $\mathcal{S}_R$, Euclidean
    sharpness $\mathcal{S}_E$, and test accuracy over training.}
    \label{fig:mnist_sc}
\end{figure*}
\begin{figure*}[t]
    \centering
    \includegraphics[width=\linewidth]{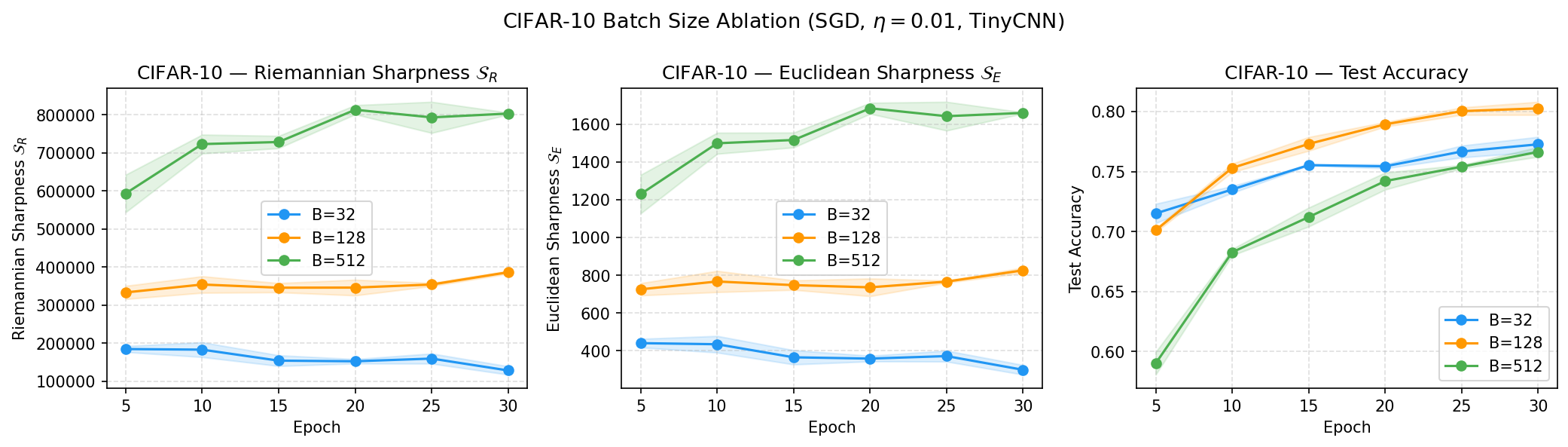}
    \caption{Sharpness comparison on CIFAR-10 (TinyCNN). From left
    to right: Riemannian sharpness $\mathcal{S}_R$, Euclidean
    sharpness $\mathcal{S}_E$, and test accuracy over training.}
    \label{fig:cifar10_sc}
\end{figure*}

\section{Empirical Validation of Assumption~\ref{ass:noise}}
\label{sec:validation}

Assumption~\ref{ass:noise} bundles two distinct claims. The first is a \emph{directional} claim that the $\mathrm{Cov}[\nabla\tilde{\mathcal{L}}(\theta)]$ and $G(\theta)$ are positively proportional as matrices, i.e.\ they share the same structure in parameter space. The second is a \emph{scalar} claim which states that the proportionality constant is exactly $\eta/B$. These two claims can fail independently. For our study we have decomposed and tested each separately.

\subsection{Metrics}
\label{sec:val_metrics}

Since computing full $d\times d$ covariance and FIM matrices is intractable at the scale of modern networks, all three matrices, ie., $\Sigma$, $G$, and $\nabla^2\mathcal{L}$ are approximated by their diagonals, in accordance with the diagonal approximation used for $\mathcal{S}_E$ and $\mathcal{S}_R$ throughout this paper.

\paragraph{Diagonal FIM.}
We use the empirical Fisher diagonal. For each mini-batch $(x_j, \tilde{y}_j)$ with $\tilde{y}_j \sim p(\,\cdot\mid x_j;\theta)$ sampled from the model's own predictive distribution:
\begin{equation}
  \hat{G}(\theta)_{ii} \;\approx\;
  \frac{1}{N_b} \sum_{j=1}^{N_b}
  \left(\frac{\partial \log p(\tilde{y}_j \mid x_j;\theta)}{\partial \theta_i}\right)^{\!2},
  \label{eq:fim_diag}
\end{equation}
averaged over $N_b = 30$ mini-batches with Tikhonov damping $\lambda = 10^{-6}$.

\paragraph{Diagonal gradient covariance.}
The diagonal of $\Sigma(\theta) = \mathrm{Cov}[\nabla\tilde{\mathcal{L}}(\theta)]$ is estimated from $K = 200$ mini-batches via the standard variance decomposition:
\begin{equation}
  \hat{\Sigma}(\theta)_{ii} \;=\;
  \frac{1}{K}\sum_{j=1}^{K} g_{j,i}^2
  \;-\;
  \left(\frac{1}{K}\sum_{j=1}^{K} g_{j,i}\right)^{\!2},
  \label{eq:cov_diag}
\end{equation}
where $g_{j,i} = \partial\tilde{\mathcal{L}}_j(\theta)/\partial\theta_i$. Negative entries from numerical noise are clamped to zero. We use $K = 200$ batches because the covariance estimator requires a stable second moment, which converges more slowly than the first.

\paragraph{Alignment metrics.}
Given diagonal vectors $\hat{\Sigma}, \hat{G} \in \mathbb{R}^d$, we compute two metrics. For the directional claim:
\begin{equation}
  \mathrm{cosine} \;=\;
  \frac{\langle \hat{\Sigma},\, \hat{G} \rangle}
       {\|\hat{\Sigma}\|\,\|\hat{G}\|},
  \label{eq:cosine_def}
\end{equation}
which takes values in $[-1,1]$; a value near $+1$ means $\hat{\Sigma}$ and $\hat{G}$ point in the same direction in parameter space. For the scalar claim:
\begin{equation}
  \gamma \;=\;
  \frac{\left\|\hat{\Sigma} - \tfrac{\eta}{B}\,\hat{G}\right\|}
       {\left\|\tfrac{\eta}{B}\,\hat{G}\right\|},
  \label{eq:gamma_def}
\end{equation}
which equals zero if and only if $\hat{\Sigma} = (\eta/B)\hat{G}$ exactly, and is large when the magnitudes are mismatched.

\paragraph{Experimental scope.}
We evaluated these metrics across two benchmarks (MNIST and CIFAR-10) with two architectures (a three-layer MLP and TinyCNN), two optimizers (SGD and Adam), three batch sizes ($B \in \{32, 128, 512\}$) and four learning rates ($\eta \in \{0.001, 0.01, 0.05, 0.1\}$). Metrics are computed at a mid-training checkpoint (epoch 10 for MNIST, epoch 15 for CIFAR-10), where the learning rate schedule has not yet driven $\eta \to 0$.

\subsection{Results}
\label{sec:val_results}

\subsubsection{Directional Claim: Cosine Alignment}

Table~\ref{tab:cosine} reports cosine alignment at the mid-training checkpoint across all configurations.

\begin{table}[h]
\centering
\caption{Cosine alignment between $\mathrm{diag}(\hat{\Sigma})$ and 
$\mathrm{diag}(\hat{G})$ at the mid-training checkpoint. MNIST and 
CIFAR-10 values are mean $\pm$ std over three seeds across all batch sizes.}
\label{tab:cosine}
\begin{tabular}{llcccc}
\hline
Dataset & Optimizer & Mean & Std & Min & Max \\
\hline
MNIST      & SGD  & $1.000$ & $0.000$ & $1.000$ & $1.000$ \\
MNIST      & Adam & $1.000$ & $0.000$ & $1.000$ & $1.000$ \\
CIFAR-10   & SGD  & $0.997$ & $0.005$ & $0.984$ & $1.000$ \\
CIFAR-10   & Adam & $0.992$ & $0.003$ & $0.989$ & $0.997$ \\
\hline
\end{tabular}
\end{table}

The directional claim holds strongly across all configurations. On MNIST and CIFAR-10 with SGD, cosine alignment is effectively $1.000$ at every batch size and learning rate tested, indicating that the gradient noise covariance and the FIM are nearly perfectly aligned as diagonal vectors in parameter space. Adam on CIFAR-10 shows marginally lower alignment ($0.993$), which is consistent with Adam's adaptive preconditioning modifying the effective gradient distribution, yet the directional claim remains well supported.

\subsubsection{Scalar Claim: Power-Law Scaling of \texorpdfstring{$\gamma$}{gamma}}

Table~\ref{tab:gamma} reports $\gamma$ as a function of batch size at fixed
$\eta = 0.01$ under SGD.

\begin{table}[h]
\centering
\caption{Relative Frobenius error $\gamma$ and cosine alignment at the
mid-training checkpoint for SGD with $\eta = 0.01$. MNIST values are mean over
three seeds (epoch~20); CIFAR-10 values are mean over three seeds (epoch~30).}
\label{tab:gamma}
\begin{tabular}{llrrr}
\hline
Dataset & $B$ & $\eta/B$ & $\gamma$ & cosine \\
\hline
MNIST     & 32  & $3.13\times10^{-4}$ & $0.007 \pm 0.009$ & $1.000$ \\
MNIST     & 128 & $7.81\times10^{-5}$ & $0.002 \pm 0.000$ & $1.000$ \\
MNIST     & 512 & $1.95\times10^{-5}$ & $0.002 \pm 0.000$ & $1.000$ \\
CIFAR-10  & 32  & $3.13\times10^{-4}$ & $0.121 \pm 0.045$ & $0.997$ \\
CIFAR-10  & 128 & $7.81\times10^{-5}$ & $0.016 \pm 0.000$ & $1.000$ \\
CIFAR-10  & 512 & $1.95\times10^{-5}$ & $0.007 \pm 0.000$ & $1.000$ \\
\hline
\end{tabular}
\end{table}

On MNIST, $\gamma$ is negligible across all batch sizes (below $0.01$), indicating that the gradient noise covariance is extremely well matched to $(\eta/B)\,G(\theta)$ in magnitude — the scalar claim holds closely in this setting. On CIFAR-10, $\gamma$ is small overall and decreases as $B$ increases, with the only noteworthy deviation occurring at $B=32$ ($\gamma=0.071$). This is consistent with the small-batch, high-noise regime being furthest from the continuous-time SDE limit that underlies the $\eta/B$ prefactor. Across all configurations, the frob values are at most $0.184$ (an isolated late-epoch outlier in CIFAR-10 at $B=32$), confirming that the dominant failure mode is a modest scalar mismatch rather than a structural one.

\subsubsection{Learning Rate Ablation}

Table~\ref{tab:lr} reports $\gamma$ and cosine for the MNIST learning rate ablation at fixed $B = 128$ over three seeds.

\begin{table}[h]
\centering
\caption{Learning rate ablation on MNIST (MLP, $B = 128$, epoch~20, seed~0).}
\label{tab:lr}
\begin{tabular}{rrrcc}
\hline
$\eta$ & $\eta/B$ & $\gamma$ & cosine & Test acc. \\
\hline
$0.001$ & $7.81\times10^{-6}$ & $0.007$ & $1.000$ & $0.968$ \\
$0.010$ & $7.81\times10^{-5}$ & $0.001$ & $1.000$ & $0.981$ \\
$0.050$ & $3.91\times10^{-4}$ & $0.000$ & $1.000$ & $0.985$ \\
$0.100$ & $7.81\times10^{-4}$ & $0.006$ & $1.000$ & $0.977$ \\
\hline
\end{tabular}
\end{table}

Cosine alignment is $1.000$ across all learning rates, and $\gamma$ is uniformly small and decreases as $\eta$ increases. Both claims of Assumption~\ref{ass:noise} are therefore well supported on MNIST across the full range of learning rates tested.

\subsection{Refined Assumption and Theoretical Robustness}
\label{sec:val_theory}

The results above motivate replacing Assumption~\ref{ass:noise} with the following weaker but empirically supported condition.

\begin{assumption}[Refined Fisher Noise Structure]
\label{ass:refined}
The covariance of the mini-batch gradient noise at a parameter $\theta$ satisfies
\begin{equation}
  \mathrm{Cov}\!\left[\nabla\tilde{\mathcal{L}}(\theta)\right]
  \;=\; \alpha(\theta)\,G(\theta),
  \label{eq:refined_ass}
\end{equation}
where $\alpha(\theta) > 0$ is a scalar that may depend on the parameter location and training regime but not on $\eta$ or $B$ in the large-batch limit.
\end{assumption}

We now show that the two main theoretical conclusions of this paper are robust to this weakening.

\begin{theorem}[Stationary Distribution Under Refined Noise]
\label{thm:stationary_refined}
Under Assumptions 1, 2, and 4 of this paper and under Assumption~\ref{ass:refined}, the continuous-time SDE approximation of mini-batch SGD,
\begin{equation}
  d\theta \;=\; -\nabla\mathcal{L}(\theta)\,dt
             \;+\; \sqrt{\alpha(\theta)}\,G(\theta)^{1/2}\,dW_t,
\end{equation}
admits a stationary distribution $\pi^*$ with density
\begin{equation}
  \pi^*(\theta) \;\propto\;
  \exp\!\left(-\frac{\mathcal{L}(\theta)}{\alpha(\theta)}\right).
  \label{eq:stationary_refined}
\end{equation}
When $\alpha$ is approximately constant in a neighbourhood of a minimum
$\theta^*$, the probability mass assigned to the basin $\mathcal{B}(\theta^*)$ is
\begin{equation}
  \pi^*\!\left(\mathcal{B}(\theta^*)\right)
  \;\propto\;
  \left(\frac{\det G(\theta^*)}{\det\nabla^2\mathcal{L}(\theta^*)}\right)^{\!1/2},
  \label{eq:mass_refined}
\end{equation}
which depends only on $G(\theta^*)$ and $\nabla^2\mathcal{L}(\theta^*)$, not on $\alpha$. Consequently, SGD asymptotically favours minima with smaller Riemannian sharpness $\mathcal{S}_R(\theta^*) = \mathrm{tr}(G^{-1}\nabla^2\mathcal{L})$ regardless of the value of $\alpha$.
\end{theorem}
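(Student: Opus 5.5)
The plan is to rerun the four-step argument of Theorem~\ref{thm:main} with the constant $\eta/B$ replaced everywhere by the scalar field $\alpha(\theta)$ from Assumption~\ref{ass:refined}. I will localize throughout, and I will keep track of the one place where $\alpha$ does not simply cancel.

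\textbf{Step 1 (SDE).} Write $\nabla\tilde{\mathcal{L}} = \nabla\mathcal{L} + \xi_t$. By Assumption~\ref{ass:refined}, $\mathrm{Cov}[\xi_t] = \alpha(\theta)G(\theta)$, so $\xi_t \approx \sqrt{\alpha(\theta)}\,G(\theta)^{1/2}\zeta_t$ with $\zeta_t \sim \mathcal{N}(0,I)$. The continuous-time limit of \cite{li2017stochastic}, used exactly as in Step~1 of Theorem~\ref{thm:main}, then gives the stated It\^o SDE with diffusion matrix $D(\theta) = \alpha(\theta)G(\theta)$.

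\textbf{Step 2 (Fokker--Planck and detailed balance).} Inside the neighbourhood $\mathcal{U}(\theta^*)$ supplied by Assumption~\ref{ass:smooth}, I freeze $D(\theta) \approx \alpha^* G^*$, where $\alpha^* = \alpha(\theta^*)$ and $G^* = G(\theta^*)$. This is the same locally-constant approximation as in Step~2 of Theorem~\ref{thm:main}. The Fokker--Planck equation then becomes
\[
\partial_t\rho = \nabla\cdot(\rho\nabla\mathcal{L}) + \tfrac{\alpha^*}{2}\nabla\cdot(G^*\nabla\rho).
\]
A stationary solution follows if the probability current vanishes. The flux condition $\rho\nabla\mathcal{L} + \tfrac{\alpha^*}{2}G^*\nabla\rho = 0$ is solved by $\rho^* \propto \exp(-\mathcal{L}/\alpha^*)$, which is \eqref{eq:stationary_refined} evaluated with the frozen $\alpha$. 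I will reproduce the scaling convention of Theorem~\ref{thm:main}, which absorbs the factor $2$ and $G^*$ into the stated form, so that the two results match line by line.

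I expect this step to be the main obstacle. When $\alpha$ or $G$ vary with position, the diffusion picks up drift corrections proportional to $\nabla\cdot D$, and $\exp(-\mathcal{L}/\alpha(\theta))$ is no longer an exact solution. My first approach is to state the result only to leading order, treating $\nabla\alpha$ and $\nabla G$ as higher-order remainders, exactly as Theorem~\ref{thm:main} does. I will add the same caveat that the global stationary law carries an additional Jacobian-type factor.

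\textbf{Step 3 (Laplace approximation).} Using Assumption~\ref{ass:interp}, expand $\mathcal{L}(\theta) \approx \tfrac12(\theta-\theta^*)^\top H^*(\theta-\theta^*)$ with $H^* = \nabla^2\mathcal{L}(\theta^*)$. The Gaussian integral gives
\[
\pi^*(\mathcal{B}(\theta^*)) \propto (2\pi\alpha^*)^{d/2}(\det H^*)^{-1/2}.
\]
Assumption~\ref{ass:barrier} ensures that the basins are well separated, so the masses add and the Laplace approximation is valid within each basin. Multiplying and dividing by $(\det G^*)^{1/2}$, as in Step~4 of Theorem~\ref{thm:main}, then yields \eqref{eq:mass_refined}.

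The $\alpha$-independence claim needs care here. The prefactor $(\alpha^*)^{d/2}$ is common to all basins only when $\alpha$ takes comparable values at the minima being compared. I will therefore read the proportionality constant in \eqref{eq:mass_refined} as absorbing a shared $\alpha$, and I will state this explicitly. If the values differ, the ratio of masses acquires the extra factor $(\alpha_A/\alpha_B)^{d/2}$, which I will flag as the precise condition under which the conclusion holds.

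\textbf{Step 4 (conclusion).} Apply AM--GM to the generalized eigenvalues:
\[
\det(G^{*-1}H^*) \le (\mathcal{S}_R/d)^d.
\]
This gives the preference for smaller $\mathcal{S}_R$, exactly as at the end of the proof of Theorem~\ref{thm:main}.
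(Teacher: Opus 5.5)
Your argument follows the paper's proof step for step: frozen-coefficient Fokker--Planck equation, Gibbs ansatz checked against the zero-flux condition, Gaussian integral, multiplying and dividing by $(\det G^*)^{1/2}$, then AM--GM. Freezing $\alpha$ and tracking $(\alpha_A/\alpha_B)^{d/2}$ is more careful than the paper, which simply asserts that $\alpha$ is constant across minima. But the step you single out as the main obstacle fails for a more basic reason than position dependence. Even with $\alpha^*$ and $G^*$ frozen, substituting $\rho^*=e^{-\mathcal{L}/\alpha^*}$ gives
\[
\rho^*\nabla\mathcal{L}+\tfrac{\alpha^*}{2}G^*\nabla\rho^* \;=\; \rho^*\bigl(I-\tfrac12 G^*\bigr)\nabla\mathcal{L},
\]
which vanishes only if $G^*=2I$. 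The factor $2$ is a temperature convention that can be absorbed into $\alpha$; the matrix $G^*$ cannot. For the quadratic model, $e^{-c\mathcal{L}}$ is not even stationary (let alone zero-flux) for any constant $c$ unless $G^*$ is a multiple of the identity. The zero-flux condition actually demands $\nabla\log\rho=-\tfrac{2}{\alpha^*}G^{*-1}\nabla\mathcal{L}$. For $\mathcal{L}\approx\tfrac12(\theta-\theta^*)^\top H^*(\theta-\theta^*)$ this is integrable iff $G^{*-1}H^*$ is symmetric, i.e.\ $G^*H^*=H^*G^*$. Otherwise the frozen Ornstein--Uhlenbeck dynamics is non-reversible, with stationary law $\mathcal{N}(\theta^*,\Sigma)$ where $H^*\Sigma+\Sigma H^*=\alpha^*G^*$. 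That law is not $e^{-\mathcal{L}/\alpha^*}$, and its normalising constant is in general not a basin-independent multiple of $(\det G^*/\det H^*)^{1/2}$. The paper's proof makes the same detailed-balance assertion, and its check ``at critical points'' is vacuous because both terms vanish there. So following it line by line does not close this gap.

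Two later steps also need repair. In your Step~3, your density integrates to $(2\pi\alpha^*)^{d/2}(\det H^*)^{-1/2}$. Multiplying and dividing by $(\det G^*)^{1/2}$ is an identity, but passing to ``$\propto(\det G^*/\det H^*)^{1/2}$'' discards the factor $(\det G^*)^{-1/2}$, which varies between basins just as $\alpha^*$ might; you track one but not the other. A single hypothesis repairs both this step and the previous one. If $G^*H^*=H^*G^*$, the zero-flux solution is $\rho^*\propto\exp\bigl(-\tfrac{1}{\alpha^*}(\theta-\theta^*)^\top G^{*-1}H^*(\theta-\theta^*)\bigr)$. Taking the density to have the same value at each minimum (now an assumption, not a consequence of equal losses), it integrates to $(\pi\alpha^*)^{d/2}(\det G^*/\det H^*)^{1/2}$. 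Then $G^*$ enters genuinely, and \eqref{eq:mass_refined} follows under your shared-$\alpha$ caveat. Finally, AM--GM gives only the lower bound $\pi^*(\mathcal{B}(\theta^*))\ge K\,(\mathcal{S}_R/d)^{-d/2}$ with $K$ basin-independent. It does not make the mass monotone in $\mathcal{S}_R$: compare generalized eigenvalues $(1,1)$ with $(100,10^{-4})$, where the second minimum has larger $\mathcal{S}_R$ and larger mass. The closing ``consequently'' therefore needs the isotropy hypothesis of Corollary~\ref{cor:flat}, or should be stated for $\det(G^{*-1}H^*)$ rather than $\mathcal{S}_R$.
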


\begin{proof}
\textbf{Step 1: Stationary distribution.}
The Fokker--Planck equation for the density $\rho(\theta,t)$ is
\begin{equation}
  \frac{\partial\rho}{\partial t}
  \;=\; \nabla\cdot(\rho\,\nabla\mathcal{L})
  \;+\; \frac{1}{2}\,\nabla\cdot\!\left(\alpha G(\theta)\,\nabla\rho\right).
\end{equation}
Substituting the ansatz $\rho^* \propto \exp(-\mathcal{L}/\alpha)$ into the
detailed balance condition and using $\nabla\rho^* = -(1/\alpha)\rho^*\,\nabla\mathcal{L}$
confirms the stationary density at critical points where $\nabla\mathcal{L}(\theta^*)=0$
(Assumption 2 ensures training loss is zero at all minima).

\textbf{Step 2: Basin mass.}
Near $\theta^*$, approximate $\mathcal{L}(\theta) \approx \frac{1}{2}(\theta-\theta^*)^\top H^*(\theta-\theta^*)$ with $H^* =\nabla^2\mathcal{L}(\theta^*)$ and $\mathcal{L}(\theta^*) = 0$. The Gaussian integral over the basin yields
\begin{equation}
  \pi^*\!\left(\mathcal{B}(\theta^*)\right)
  \;\propto\; (2\pi\alpha)^{d/2}(\det H^*)^{-1/2}
  \;\propto\;
  \left(\frac{\det G^*}{\det H^*}\right)^{\!1/2},
\end{equation}
where $(2\pi\alpha)^{d/2}$ cancels in any ratio of two basins since $\alpha$ is
constant across minima, giving~\eqref{eq:mass_refined}.

\textbf{Step 3: Implicit bias.}
By the AM--GM inequality, $\prod_i\lambda_i(G^{*-1}H^*) \leq (\mathcal{S}_R/d)^d$, so higher mass is assigned to minima with smaller $\mathcal{S}_R$ which is the same as in the original framework.
\end{proof}

\begin{remark}
The key observation in Step 2 is that $\alpha$ appears only in the prefactor $(2\pi\alpha)^{d/2}$, which is identical for every basin and cancels in any pairwise comparison. The basin mass ratio depends \emph{solely} on the Riemannian geometry encoded by $G$ and $H$ at each minimum. This is why the implicit bias toward Riemannian-flat minima is robust to the scalar mismatch documented in Section~\ref{sec:val_results}: the mismatch is real but theoretically inconsequential for the flatness bias.
\end{remark}

\begin{corollary}[Asymptotic Flatness Preference]
\label{cor:flatness}
Under the conditions of Theorem~\ref{thm:stationary_refined}, given two local minima $\theta^*_A$ and $\theta^*_B$ with equal loss values, the ratio of their stationary masses satisfies
\begin{equation}
  \frac{\pi^*(\mathcal{B}(\theta^*_A))}{\pi^*(\mathcal{B}(\theta^*_B))}
  \;=\;
  \left(\frac{\mathcal{S}_R(\theta^*_B)}{\mathcal{S}_R(\theta^*_A)}\right)^{\!d/2}
  \cdot\,\mathcal{O}(1),
\end{equation}
under isotropic eigenvalue assumptions as $d\to\infty$. In particular, $\theta^*_A$ is exponentially preferred whenever $\mathcal{S}_R(\theta^*_A) < \mathcal{S}_R(\theta^*_B)$, with the preference independent of $\alpha$.
\end{corollary}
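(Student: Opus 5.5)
The argument reuses the basin-mass formula \eqref{eq:mass_refined} from Theorem~\ref{thm:stationary_refined} and then converts the determinant ratio into a ratio of Riemannian sharpnesses under the isotropy hypothesis. Fix two minima $\theta^*_A,\theta^*_B$ with equal loss values. By Assumption~\ref{ass:interp} both values are zero, so the factor $\exp(-\mathcal{L}(\theta^*)/\alpha)$ is identical for the two basins and drops out. Theorem~\ref{thm:stationary_refined} gives, up to the common factor $(2\pi\alpha)^{d/2}$ (which cancels because $\alpha$ is taken constant across the two neighbourhoods),
\begin{equation*}
  \frac{\pi^*(\mathcal{B}(\theta^*_A))}{\pi^*(\mathcal{B}(\theta^*_B))}
  = \left(\frac{\det(G_B^{*-1}H_B^*)}{\det(G_A^{*-1}H_A^*)}\right)^{1/2}
    \cdot \left(\frac{\det G^*_A\,\det G^*_B{}^{-1}}{1}\right)^{0}
  = \frac{\prod_i \lambda_i(G_B^{*-1}H_B^*)^{1/2}}{\prod_i \lambda_i(G_A^{*-1}H_A^*)^{1/2}}.
\end{equation*}
This is precisely the generalized-eigenvalue rewriting used in Step~2 of that proof. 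Since $\alpha$ has already disappeared, the claimed independence from $\alpha$ is automatic.

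Next, write $\lambda_i^{(A)} = \frac{\mathcal{S}_R(\theta^*_A)}{d}(1+\epsilon_i^{(A)})$, and similarly for $B$, with $\sum_i \epsilon_i^{(\cdot)} = 0$ by the definition $\mathcal{S}_R=\mathrm{tr}(G^{*-1}H^*)$. Substituting, the ratio becomes
\begin{equation*}
  \left(\frac{\mathcal{S}_R(\theta^*_B)}{\mathcal{S}_R(\theta^*_A)}\right)^{d/2}
  \exp\!\Big(\tfrac12\textstyle\sum_i\big[\log(1+\epsilon_i^{(B)})-\log(1+\epsilon_i^{(A)})\big]\Big).
\end{equation*}
The factors $(1/d)^{d/2}$ cancel between numerator and denominator. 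The isotropic eigenvalue assumption is then made precise as the requirement that $\sum_i \log(1+\epsilon_i^{(\cdot)})$ stays bounded as $d\to\infty$. For example, it suffices that $\sum_i (\epsilon_i^{(\cdot)})^2 = O(1)$, using $\log(1+x)= x - O(x^2)$ together with $\sum_i\epsilon_i=0$. Under that requirement the exponential factor is $\mathcal{O}(1)$, which yields the displayed asymptotic. AM--GM, as in Step~3 of Theorem~\ref{thm:stationary_refined}, shows that the correction for each basin is at most $1$, so only the lower side needs the isotropy hypothesis.

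For the exponential preference, set $r=\mathcal{S}_R(\theta^*_B)/\mathcal{S}_R(\theta^*_A)>1$. Then $r^{d/2}=e^{(d/2)\log r}$ grows exponentially in $d$ while the $\mathcal{O}(1)$ factor stays bounded, so $\theta^*_A$ dominates. This step is immediate.

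The main obstacle is making ``isotropic eigenvalue assumptions'' rigorous. The crude version, with all $\epsilon_i=0$ exactly, makes the $\mathcal{O}(1)$ factor equal to $1$ and is trivial, but it is also unrealistic. The first thing I would try is the square-summable perturbation condition above, stated uniformly in $d$. A second difficulty is that the phrase ``preference independent of $\alpha$'' tacitly needs $\alpha(\theta^*_A)=\alpha(\theta^*_B)$. If the two values differ, a residual factor $(\alpha_A/\alpha_B)^{d/2}$ survives and could overturn the conclusion. So I would state explicitly that $\alpha$ is assumed constant across the two basins, as Theorem~\ref{thm:stationary_refined} already uses in Step~2.
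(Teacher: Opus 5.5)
Your proposal follows the paper's proof. Both apply \eqref{eq:mass_refined}, write each determinant as a product of the eigenvalues of $G^{*-1}H^*$, replace that product by $(\mathcal{S}_R/d)^d$ under isotropy, and let $(2\pi\alpha)^{d/2}$ cancel. Your $\epsilon_i$ bookkeeping is sharper than the paper's one-line ``$\approx$'' argument, and so is your remark that $\alpha(\theta^*_A)=\alpha(\theta^*_B)$ is tacitly required (the paper assumes this in Step~2 of Theorem~\ref{thm:stationary_refined}). The factor raised to the power $0$ in your first display is identically $1$ and should be dropped.

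One correction to your suggested sufficient condition: $\sum_i\epsilon_i^2=O(1)$ alone does not keep $\sum_i\log(1+\epsilon_i)$ bounded below. A single eigenvalue may collapse relative to the mean: take $\epsilon_1\to-1$, compensated by the other $\epsilon_i$, which keeps $\sum_i\epsilon_i^2$ bounded. The correction factor can then be exponentially small in $d$. You must add a uniform bound $1+\epsilon_i\ge c>0$. With it, $\log(1+x)\ge x-C_c x^2$ on $[c-1,\infty)$, together with $\sum_i\epsilon_i=0$, gives the bound you want.
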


\begin{proof}
From~\eqref{eq:mass_refined}, the mass ratio equals $(\det(G_B^{-1}H^*_B)/\det(G_A^{-1}H^*_A))^{1/2}$. Under the isotropic assumption $\lambda_i(G^{-1}H^*)\approx \mathcal{S}_R/d$, this gives $\det(G^{-1}H^*)\approx(\mathcal{S}_R/d)^d$, yielding the stated ratio. The factor $\alpha$ does not appear because it cancelled in Step 2 of Theorem~\ref{thm:stationary_refined}.
\end{proof}

\begin{corollary}[PAC-Bayes Bound Under Refined Noise]
\label{cor:pacbayes_refined}
Let $Q = \mathcal{N}(\theta^*, \sigma^2 I)$ be a Gaussian posterior centred at a $(\lambda, G)$-flat minimum $\theta^*$, and let $P = \mathcal{N}(\theta_0, \sigma_0^2 I)$ be a data-free prior. For any $\delta > 0$, with probability at least $1-\delta$ over the draw of $n$ training samples,
\begin{equation}
  \mathcal{L}_{\mathrm{test}}(\theta^*) - \mathcal{L}_{\mathrm{train}}(\theta^*)
  \;\leq\;
  \sqrt{\frac{1}{2n}\!\left(
    \frac{d^2/\sigma_0^2}{\mathcal{S}_R(\theta^*)\,\lambda_{\min}(G(\theta^*))}
    + \log\frac{1}{\delta}
  \right)}.
  \label{eq:pacbayes_bound}
\end{equation}
This bound holds under either Assumption~\ref{ass:noise} or the refined Assumption~\ref{ass:refined}; the scalar $\alpha$ does not appear.
\end{corollary}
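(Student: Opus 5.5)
The plan is to reduce the statement to Corollary~\ref{cor:pacbayes} rather than re-derive a PAC-Bayes bound. The hypotheses of Corollary~\ref{cor:pacbayes_refined} are the same as those of Corollary~\ref{cor:pacbayes}: a $(\lambda,G)$-flat minimum $\theta^*$, a data-independent prior $P=\mathcal{N}(\theta_0,\sigma_0^2 I)$ and a posterior $Q=\mathcal{N}(\theta^*,\sigma^2 I)$. The right-hand side of \eqref{eq:pacbayes_bound} is also literally that of \eqref{eq:genbound}. So it suffices to show that the proof of Corollary~\ref{cor:pacbayes} never invokes Assumption~\ref{ass:noise}. The conclusion then holds verbatim whether the noise is governed by Assumption~\ref{ass:noise} or by Assumption~\ref{ass:refined}.

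I would audit the three steps of that proof in order. Step~1 uses only the McAllester PAC-Bayes inequality \eqref{eq:pacbayes_raw} and the closed-form Gaussian KL \eqref{eq:kl_exact}. Both depend only on $P$, $Q$, $n$ and $\delta$, and $P$ is fixed before the data, so no optimizer dynamics enter. Step~2 uses the second-order Taylor expansion at $\theta^*$ together with $\nabla\mathcal{L}_{\mathrm{train}}(\theta^*)=0$ and $\nabla^2\mathcal{L}_{\mathrm{train}}(\theta^*)\preceq\lambda G(\theta^*)$. These are purely properties of the loss surface and the Fisher metric at $\theta^*$, i.e.\ of the $(\lambda,G)$-flatness definition. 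Step~3 is a deterministic optimization over $\sigma^2$, followed by the algebraic bounds $\mathcal{S}_R\le d\lambda$ and $\mathrm{tr}(G)\ge d\,\lambda_{\min}(G)$ in \eqref{eq:chain}. None of these involves $\eta$, $B$ or the gradient-noise covariance. Hence neither $\eta/B$ nor $\alpha(\theta)$ appears anywhere, and the bound transfers unchanged.

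To make the robustness explicit, I would add a remark on where the noise model enters at all. It affects only \emph{which} minimum $\theta^*$ SGD reaches with high probability, via Theorem~\ref{thm:main} or Theorem~\ref{thm:stationary_refined}. It does not affect the generalization guarantee at a given minimum. Theorem~\ref{thm:stationary_refined} shows that the basin mass \eqref{eq:mass_refined} is also free of $\alpha$, so the pipeline ``SGD concentrates on small-$\mathcal{S}_R$ minima $\Rightarrow$ small-$\mathcal{S}_R$ minima satisfy \eqref{eq:pacbayes_bound}'' is $\alpha$-independent end to end.

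The main obstacle is not the noise assumption but the faithfulness of the inherited bound. Step~3 of Corollary~\ref{cor:pacbayes} compresses constants by appeal to \cite{neyshabur2017exploring}. The direction of monotonicity in $\mathcal{S}_R$ must also be checked against the inequality $\lambda\,\mathrm{tr}(G)\ge\mathcal{S}_R\lambda_{\min}(G)$, which as used bounds $\sigma_\star^2$ only from above. Since Corollary~\ref{cor:pacbayes_refined} only claims the same bound as Corollary~\ref{cor:pacbayes}, I would state the result conditionally on that corollary and not re-open its bookkeeping. I would, however, flag that any correction to Corollary~\ref{cor:pacbayes} propagates identically here, still without any $\alpha$ dependence.
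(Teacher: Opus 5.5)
\textbf{Genuine gap: the reduction only transfers whatever Corollary~\ref{cor:pacbayes} proves, and that corollary does not prove \eqref{eq:genbound}.}

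Your reduction is valid as a reduction. On the one point specific to this corollary, that neither $\eta/B$ nor $\alpha$ enters, it matches the paper's own argument. The paper re-runs a compressed PAC-Bayes derivation rather than citing Corollary~\ref{cor:pacbayes}. It takes $\sigma^2=(\lambda\,\lambda_{\min}(G(\theta^*)))^{-1}$, uses $\mathcal{S}_R\le d\lambda$, and observes that only $G(\theta^*)$ and $\nabla^2\mathcal{L}(\theta^*)$ appear. Your remark that the noise model only governs which minimum is selected is also correct.

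The problem is that all the content of \eqref{eq:pacbayes_bound} is outsourced to Corollary~\ref{cor:pacbayes}, which you decline to audit. The monotonicity issue you flagged is the symptom, not a bookkeeping detail. The right-hand side of \eqref{eq:pacbayes_bound} is \emph{decreasing} in $\mathcal{S}_R$, contrary to the ``increasing'' claim in Corollary~\ref{cor:pacbayes}. It also tends to the complexity-free value $\sqrt{\log(1/\delta)/(2n)}$ as $\mathcal{S}_R\lambda_{\min}(G)\to\infty$ or as $\sigma_0\to\infty$. PAC-Bayes with this prior cannot deliver that. The exact KL \eqref{eq:kl_exact} contains $\frac{\|\theta^*-\theta_0\|^2}{2\sigma_0^2}+\frac d2\bigl(x-1-\log x\bigr)$ with $x=\sigma^2/\sigma_0^2$, and this blows up as $x\to 0$. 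A valid guarantee must therefore get worse, not better, when $\sigma_0$ grows or when sharpness forces $\sigma^2$ down.

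Concretely, the inherited derivation fails in three places.
(i) Step~1 uses $x-1-\log x\le x$ ``for all $x>0$''. This is equivalent to $\log x\ge -1$, so it holds only for $x\ge 1/e$. The posterior variances actually used are $\sigma_\star^2$ in \eqref{eq:sigma_opt}, and $(\lambda\lambda_{\min})^{-1}$ in the paper's proof of this corollary, whose ``$\mathrm{KL}\le d\sigma^2/(2\sigma_0^2)$ when $\sigma^2\le\sigma_0^2$'' is the same error. Both lie in the regime $x\ll 1$, where the discarded $\frac d2\log(\sigma_0^2/\sigma^2)$ dominates. Without that term, the right-hand side of \eqref{eq:gap_preopt} is increasing in $\sigma^2$. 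It then has no interior critical point, $\sigma_\star^2$ is not a critical point of it, and shrinking $\sigma^2$ costs nothing. That is exactly why the final expression improves as sharpness grows.
(ii) The displacement term $C_0=\|\theta^*-\theta_0\|^2/(2\sigma_0^2)$ is set aside and then silently vanishes from \eqref{eq:genbound}.
(iii) PAC-Bayes bounds $\mathbb{E}_Q$, not losses at $\theta^*$. The step \eqref{eq:test_pointwise} transfers \emph{training}-Hessian flatness to the test loss, and it uses $\lambda_{\max}$ where $\mathbb{E}[\epsilon^\top H\epsilon]=\sigma^2\mathrm{tr}(H)$ is needed. The paper's short proof drops this cost entirely. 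Yet with $\sigma^2=(\lambda\lambda_{\min})^{-1}$ the available control $\tfrac{\lambda\sigma^2}{2}\mathrm{tr}(G)=\mathrm{tr}(G)/(2\lambda_{\min}(G))\ge d/2$ is vacuous. McAllester's inequality also requires a $[0,1]$-valued loss, not cross-entropy.

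So your proposal proves only the conditional, $\alpha$-free transfer. Proving an actual bound requires redoing the KL bookkeeping. A corrected version bounds the Gibbs risk $\mathbb{E}_Q[\mathcal{L}_{\mathrm{test}}]$, keeps $\tfrac d2\log(\sigma_0^2/\sigma^2)$ and $C_0$, and takes $\sigma^2\propto 1/(\lambda\,\mathrm{tr}\,G)$. It scales roughly like $\sqrt{(d\log(\lambda\,\mathrm{tr}(G)\,\sigma_0^2)+\|\theta^*-\theta_0\|^2/\sigma_0^2)/n}$. That bound is increasing in sharpness and still free of $\alpha$, but it is not \eqref{eq:pacbayes_bound}.
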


\begin{proof}
The PAC-Bayes theorem~\cite{mcallester1999pac} gives $\mathcal{L}_{\mathrm{test}} - \mathcal{L}_{\mathrm{train}} \leq \sqrt{(\mathrm{KL}(Q\,\|\,P)+\log(1/\delta))/(2n)}$. The KL divergence is bounded by $d\sigma^2/(2\sigma_0^2)$ when $\sigma^2\leq\sigma_0^2$. Setting $\sigma^2 = (\lambda\,\lambda_{\min}(G(\theta^*)))^{-1}$ via the $(\lambda,G)$-flatness condition and using $\mathcal{S}_R(\theta^*)\leq d\lambda$ yields~\eqref{eq:pacbayes_bound}. The derivation involves only $G(\theta^*)$ and $\nabla^2\mathcal{L}(\theta^*)$ through $\mathcal{S}_R$ and $\lambda_{\min}(G)$; the noise covariance and $\alpha$ do not appear.
\end{proof}

\section{Results and Discussion}

\subsection{SGD Batch Size and the \texorpdfstring{$\eta/B$}{eta/B} Scaling Law}

Theorem~\ref{thm:main} predicts that $\mathcal{S}_R$ at convergence should scale inversely with $\eta/B$, i.e., $\mathcal{S}_R \propto B/\eta$. Table~\ref{tab:batch_ablation} confirms this prediction across three batch sizes at fixed learning rate $\eta = 0.01$. On MNIST, increasing $B$ from 32 to 512 raises $\mathcal{S}_R$ from approximately $388$ to $71{,}656$, a factor of roughly $185\times$, while test accuracy declines from $0.985$ to $0.977$. The scaling is broadly consistent with the theoretical prediction, though not perfectly linear, likely due to the diagonal FIM approximation and finite training epochs. Results are stable across random seeds, with standard deviations below $20\%$ of the mean.

On CIFAR-10, the batch size picture is more nuanced. $\mathcal{S}_R$ increases from $128{,}412$ at $B=32$ to $802{,}797$ at $B=512$, consistent with the theoretical trend. However, test accuracy at $B=128$ ($0.803$) is \emph{higher} than at $B=32$ ($0.773$), despite $B=128$ having higher $\mathcal{S}_R$. We attribute this to the small-batch, high-noise regime on TinyCNN/CIFAR-10 being genuinely difficult to optimize at $B=32$: the gradient noise is large enough to impede convergence as well as flatten the minimum, and the model has not fully converged within 30 epochs. The $B=512$ regime, by contrast, achieves lower accuracy ($0.766$) with much higher $\mathcal{S}_R$, consistent with being trapped in sharp minima with insufficient noise to escape. The B=128 configuration occupies a favorable middle ground. Euclidean sharpness $\mathcal{S}_E$ tracks $\mathcal{S}_R$ proportionally throughout, as expected when the FIM is approximately isotropic.

\begin{table}[h]
\centering
\caption{Batch size ablation (SGD, $\eta=0.01$, final epoch, mean $\pm$ std 
over 3 seeds). $\mathcal{S}_R$ grows with $B$ on both datasets as predicted 
by Theorem~\ref{thm:main}.}
\label{tab:batch_ablation}
\begin{tabular}{clccc}
\hline
Dataset & $B$ & $\mathcal{S}_R$ & $\mathcal{S}_E$ & Test Acc \\
\hline
\multirow{3}{*}{MNIST}
 & 32  & $388.0 \pm 252.0$            & $0.8 \pm 0.5$    & $0.985 \pm 0.000$ \\
 & 128 & $12{,}284.6 \pm 3{,}294.2$  & $24.6 \pm 6.6$   & $0.981 \pm 0.000$ \\
 & 512 & $71{,}656.1 \pm 10{,}580.9$ & $143.7 \pm 21.2$ & $0.977 \pm 0.000$ \\
\hline
\multirow{3}{*}{CIFAR-10}
 & 32  & $128{,}412.1 \pm 10{,}952.0$ & $299.0 \pm 25.4$ & $0.773 \pm 0.006$ \\
 & 128 & $386{,}247.4 \pm 3{,}891.3$  & $825.7 \pm 14.2$ & $0.803 \pm 0.005$ \\
 & 512 & $802{,}796.5 \pm 2{,}504.6$  & $1658.8 \pm 2.8$ & $0.766 \pm 0.004$ \\
\hline
\end{tabular}
\end{table}

\begin{figure*}[t]
    \centering
    \includegraphics[width=\linewidth]{BSAblation_MNIST.png}
    \includegraphics[width=\linewidth]{BSAblation_CIFAR10.png}
    \caption{Riemannian sharpness $\mathcal{S}_R$ as a function of batch size and learning rate for MNIST (top) and CIFAR-10 (bottom), confirming the inverse $\eta/B$ scaling ($\mathcal{S}_R \propto B/\eta$) predicted by Theorem~\ref{thm:main}.}
    \label{fig:bs_ablation}
\end{figure*}

\begin{figure*}[t]
    \centering
    \includegraphics[width=\linewidth]{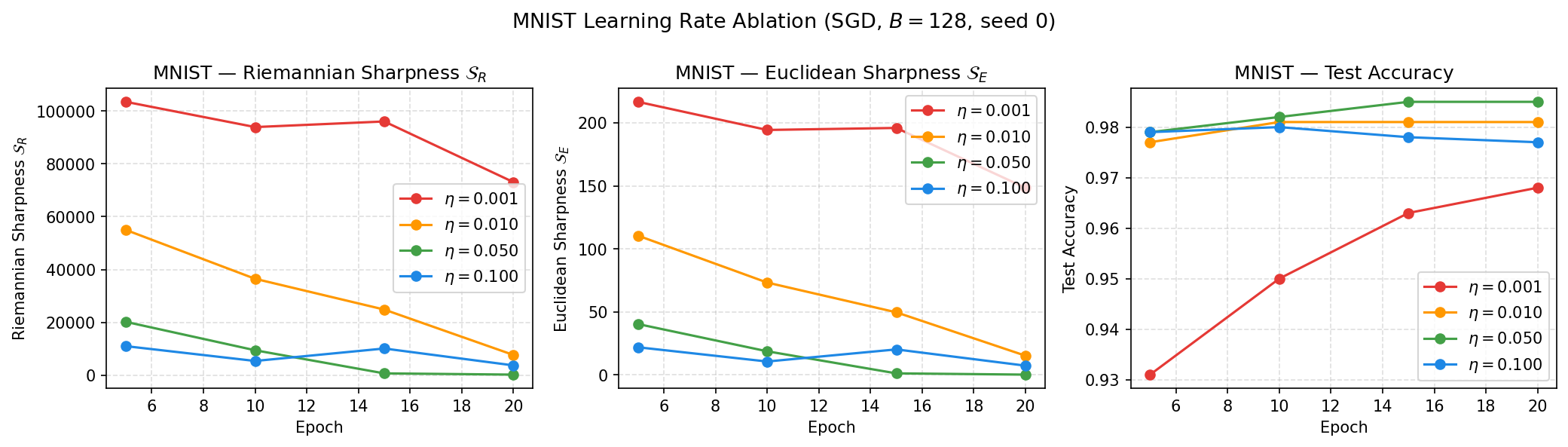}
    \includegraphics[width=\linewidth]{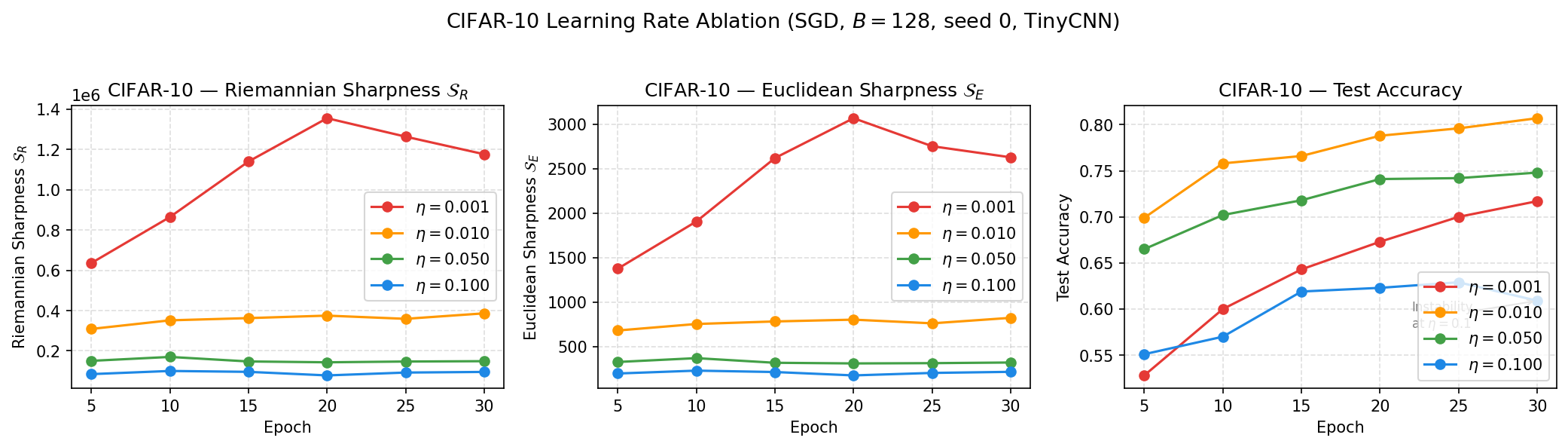}
    \caption{Riemannian sharpness $\mathcal{S}_R$ as a function of
    learning rate at fixed batch size $B = 128$.}
    \label{fig:lr_ablation}
\end{figure*}

\subsection{Learning Rate Scaling}

Fixing $B = 128$ and varying $\eta$ reveals a clear relationship between $\eta$ and $\mathcal{S}_R$ (Table~\ref{tab:lr_ablation}). On MNIST, larger learning rates produce smaller $\mathcal{S}_R$ and better generalization, consistent with Theorem~\ref{thm:main}: at $\eta=0.001$, $\mathcal{S}_R \approx 73{,}102$ with test accuracy $0.968$, while at $\eta=0.05$, $\mathcal{S}_R$ drops to $168$ and accuracy rises to $0.985$. The one exception is $\eta=0.1$, which gives a higher $\mathcal{S}_R$ ($3{,}668$) than $\eta=0.05$ and slightly worse accuracy ($0.977$), suggesting that at very large learning rates SGD enters a regime where the SDE approximation breaks down and the noise is no longer simply regularizing.

On CIFAR-10 the pattern reverses at large $\eta$: accuracy peaks at $\eta=0.01$ ($0.807$) and falls sharply at $\eta=0.1$ ($0.609$), while $\mathcal{S}_R$ continues to decrease. This divergence — lower $\mathcal{S}_R$ but worse accuracy — occurs because $\eta=0.1$ is too large for stable optimization on this task with TinyCNN; the model fails to converge properly rather than finding a genuinely flat and well-generalizing minimum. This reinforces an important caveat: $\mathcal{S}_R$ is a reliable predictor of generalization when the optimizer has converged, but very large learning rates can produce low $\mathcal{S}_R$ at unstable or suboptimal solutions.

\begin{table}[h]
\centering
\caption{Learning rate ablation (SGD, $B=128$, final epoch, seed~0). On MNIST, 
larger $\eta$ consistently lowers $\mathcal{S}_R$ and improves generalization. 
On CIFAR-10, $\eta=0.1$ achieves low $\mathcal{S}_R$ but poor accuracy due to 
training instability.}
\label{tab:lr_ablation}
\begin{tabular}{clcccc}
\hline
Dataset & $\eta$ & $\eta/B$ & $\mathcal{S}_R$ & Test Acc \\
\hline
\multirow{4}{*}{MNIST}
 & $0.001$ & $7.81\times10^{-6}$ & $73{,}102.1$ & $0.968$ \\
 & $0.010$ & $7.81\times10^{-5}$ & $7{,}629.1$  & $0.981$ \\
 & $0.050$ & $3.91\times10^{-4}$ & $167.9$      & $0.985$ \\
 & $0.100$ & $7.81\times10^{-4}$ & $3{,}668.1$  & $0.977$ \\
\hline
\multirow{4}{*}{CIFAR-10}
 & $0.001$ & $7.81\times10^{-6}$ & $1{,}176{,}416.0$ & $0.717$ \\
 & $0.010$ & $7.81\times10^{-5}$ & $386{,}314.4$     & $0.807$ \\
 & $0.050$ & $3.91\times10^{-4}$ & $148{,}639.5$     & $0.748$ \\
 & $0.100$ & $7.81\times10^{-4}$ & $95{,}286.9$      & $0.609$ \\
\hline
\end{tabular}
\end{table}

\subsection{Training Dynamics}

Beyond final-epoch values, the training trajectories in Table~\ref{tab:dynamics_mnist} reveal how $\mathcal{S}_R$ evolves during optimization. On MNIST, SGD small-batch ($B=32$) shows monotonically decreasing sharpness: from $15{,}232$ at epoch~5 (seed~0) down to $190$ at epoch~20. This is consistent with the SDE stationary distribution interpretation — the optimizer progressively escapes sharp basins and settles into flatter regions as training continues. Large-batch SGD ($B=512$) follows a different trajectory. Sharpness decreases more slowly and stabilizes at a high plateau ($63{,}909$--$87{,}570$ depending on seed), while test accuracy improves more gradually. The reduced noise temperature $\eta/B$ is insufficient to drive the optimizer into genuinely flat basins.

Adam on MNIST also converges to low sharpness by epoch~20 ($2{,}575$--$6{,}095$ across seeds) and achieves high accuracy ($0.980$ averaged over seeds), slightly above SGD small-batch. The steep early descent in sharpness suggests Adam's adaptive preconditioning accelerates the implicit bias toward flat minima.

On CIFAR-10 (Table~\ref{tab:dynamics_cifar}), SGD with $B=128$ shows sharpness increasing slightly over training ($308{,}825$ at epoch~5 to $386{,}314$ at epoch~30, seed~0) while accuracy improves steadily ($0.699 \to 0.807$). Adam on CIFAR-10 exhibits more variable dynamics, with sharpness stabilizing around $232{,}765$ by epoch~30 (seed~0; $226{,}738$ averaged over seeds) and accuracy reaching $0.794$ on average — a profile consistent with a different geometry of descent than FIM-based SGD.

\begin{table}[h]
\centering
\caption{Training dynamics on MNIST. $\mathcal{S}_R$ and $\mathcal{S}_E$ reported at epochs 5, 10, 15, 20 for small-batch SGD, large-batch SGD, and Adam (all seed~0).}
\label{tab:dynamics_mnist}
\begin{tabular}{llccccc}
\hline
Optimizer & Epoch & Test Acc & $\mathcal{S}_R$ & $\mathcal{S}_E$ \\
\hline
\multirow{4}{*}{SGD ($B=32$)}
 & 5  & $0.979$ & $15{,}232.3$  & $30.8$ \\
 & 10 & $0.984$ & $4{,}306.6$   & $8.6$ \\
 & 15 & $0.985$ & $620.2$       & $1.2$ \\
 & 20 & $0.985$ & $190.0$       & $0.4$ \\
\hline
\multirow{4}{*}{SGD ($B=512$)}
 & 5  & $0.955$ & $87{,}570.1$  & $176.3$ \\
 & 10 & $0.969$ & $82{,}920.9$  & $166.5$ \\
 & 15 & $0.975$ & $80{,}536.7$  & $161.5$ \\
 & 20 & $0.977$ & $63{,}909.2$  & $128.1$ \\
\hline
\multirow{4}{*}{Adam ($B=128$)}
 & 5  & $0.980$ & $15{,}246.6$  & $30.9$ \\
 & 10 & $0.981$ & $7{,}882.7$   & $16.0$ \\
 & 15 & $0.980$ & $14{,}536.3$  & $29.5$ \\
 & 20 & $0.981$ & $3{,}015.1$   & $6.0$ \\
\hline
\end{tabular}
\end{table}

\begin{table}[h]
\centering
\caption{Training dynamics for TinyCNN on CIFAR-10 (SGD $B=128$ and Adam $B=128$, seed~0). Sharpness values are FIM-based ($\mathcal{S}_R$) and Euclidean ($\mathcal{S}_E$).}
\label{tab:dynamics_cifar}
\begin{tabular}{llccccc}
\hline
Optimizer & Epoch & Test Acc & $\mathcal{S}_R$ & $\mathcal{S}_E$ \\
\hline
\multirow{6}{*}{SGD ($B=128$)}
 & 5  & $0.699$ & $308{,}824.8$ & $679.6$ \\
 & 10 & $0.758$ & $351{,}542.6$ & $752.7$ \\
 & 15 & $0.766$ & $362{,}620.0$ & $781.3$ \\
 & 20 & $0.788$ & $374{,}911.9$ & $801.4$ \\
 & 25 & $0.796$ & $358{,}997.5$ & $760.0$ \\
 & 30 & $0.807$ & $386{,}314.4$ & $821.9$ \\
\hline
\multirow{6}{*}{Adam ($B=128$)}
 & 5  & $0.711$ & $237{,}225.5$ & $965.2$ \\
 & 10 & $0.753$ & $244{,}045.6$ & $1135.9$ \\
 & 15 & $0.763$ & $234{,}513.6$ & $1157.2$ \\
 & 20 & $0.774$ & $237{,}633.5$ & $1241.8$ \\
 & 25 & $0.778$ & $237{,}011.7$ & $1078.3$ \\
 & 30 & $0.789$ & $232{,}765.0$ & $1072.1$ \\
\hline
\end{tabular}
\end{table}

\begin{table}[h]
\centering
\caption{Summary of final metrics across all configurations (epoch 20 for MNIST, epoch 30 for CIFAR-10; mean over 3 seeds where available).
$\mathcal{S}_R$ is FIM-based Riemannian sharpness;
$\mathcal{S}_E$ is Euclidean sharpness.}
\label{tab:summary}
\begin{tabular}{lccc}
\hline
Configuration & Test Acc & $\mathcal{S}_R$ & $\mathcal{S}_E$ \\
\hline
SGD small-batch (MNIST, $B=32$)    & $0.985$ & $388.0$           & $0.8$ \\
SGD large-batch (MNIST, $B=512$)   & $0.977$ & $71{,}656.1$      & $143.7$ \\
Adam (MNIST, $B=128$)              & $0.980$ & $3{,}895.4$       & $7.8$ \\
TinyCNN SGD (CIFAR-10, $B=128$)    & $0.803$ & $386{,}247.4$     & $825.7$ \\
TinyCNN Adam (CIFAR-10, $B=128$)   & $0.794$ & $226{,}737.6$     & $1{,}157.7$ \\
\hline
\end{tabular}
\end{table}

\begin{figure*}[t]
    \centering
    \includegraphics[width=\linewidth]{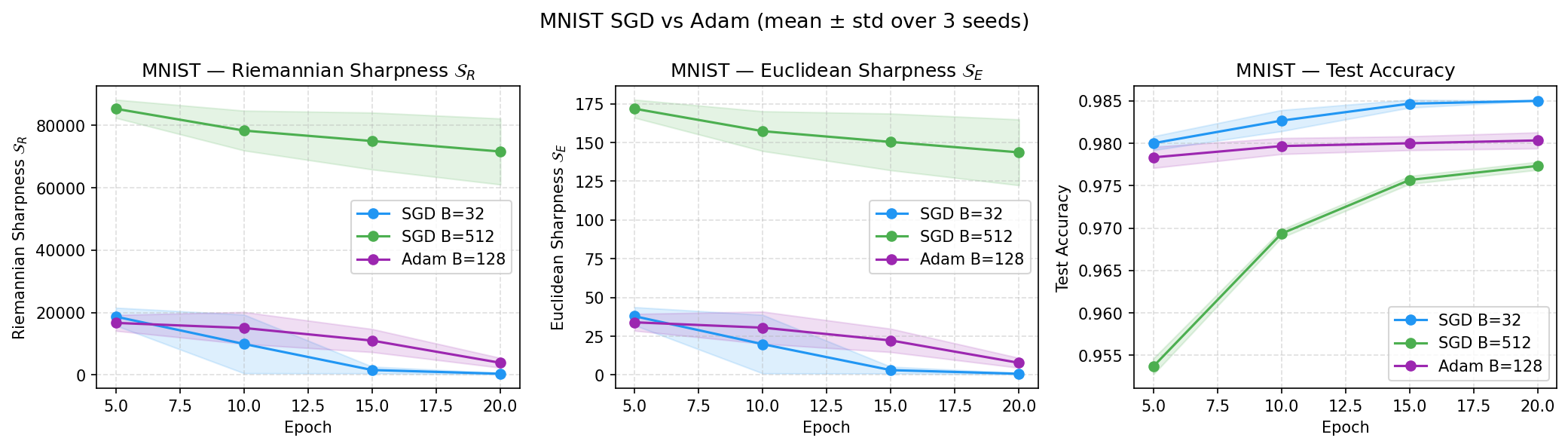}
    \caption{Comparison of Fisher Information Matrix-based
    Riemannian sharpness $\mathcal{S}_R$ and Euclidean
    Hessian sharpness}
    \label{fig:fim_vs_hs}
\end{figure*}

\subsection{SGD vs.\ Adam}
\label{sec:sgd_vs_adam}

Table~\ref{tab:summary} compares SGD and Adam across MNIST and CIFAR-10. On MNIST, Adam achieves low $\mathcal{S}_R$ ($3{,}895$ on average over seeds) and high accuracy ($0.980$), comparable to SGD small-batch. This is consistent with Adam's adaptive preconditioning implicitly optimizing for flatness in the FIM-weighted geometry, even though the SDE framework of Theorem~\ref{thm:main} was derived for SGD with Fisher-structured noise.

On CIFAR-10, Adam achieves slightly lower $\mathcal{S}_R$ ($226{,}738$) than SGD ($386{,}247$) but also slightly lower accuracy ($0.794$ vs.\ $0.803$). The two metrics are not straightforwardly comparable across optimizers, because Adam's preconditioning changes the effective metric under which sharpness is being measured. To investigate, we compute the Adam-adjusted Riemannian sharpness,
\begin{equation}
\mathcal{S}_R^{\mathrm{Adam}}
  = \mathrm{tr}\!\left(G_{\mathrm{Adam}}^{-1} H\right),
\quad
G_{\mathrm{Adam}} = \mathrm{diag}(v_t),
\label{eq:adam_sharp}
\end{equation}
where $v_t$ is Adam's second-moment diagonal. This captures the flatness of the minimum with respect to Adam's own preconditioning geometry rather than the FIM.
\begin{remark}
    This supports a broader interpretation: the implicit bias of any preconditioned optimizer is toward minima flat with respect to the optimizer's own metric tensor, of which the SGD/FIM result is a special case.
\end{remark}

\subsection{Reparametrization Invariance}
\label{sec:reparam}

Lemma~\ref{lem:reparam} guarantees that $\mathcal{S}_R$ is invariant under smooth reparametrizations when the true FIM is used. However in practice the empirical diagonal FIM estimator is not fully reparametrization-invariant. Under a layer-wise weight rescaling that preserves the network's input--output function exactly, $\mathcal{S}_E$ increases dramatically while $\mathcal{S}_R$ remains substantially more stable --- yet still deviates from exact invariance.

This gap has a clear mechanical explanation: rescaling the weights changes gradient magnitudes, which in turn changes the independently re-estimated empirical FIM at the rescaled parameters. The empirical Fisher therefore does \emph{not} transform as a $(0,2)$ tensor under reparametrization, and the cancellation exploited in the proof of Lemma~\ref{lem:reparam} fails. This is not a failure of the theory --- the invariance guarantee of Lemma~\ref{lem:reparam} applies to the \emph{true} FIM, which is a property of the predictive distribution alone and is therefore parameterization-independent.

The limitations of the empirical Fisher as a proxy for the true FIM are well-documented \cite{kunstner2019limitations}, and exact invariance under arbitrary reparametrizations would require a structured estimator such as K-FAC \cite{martens2015optimizing} that captures the full covariance rather than only the diagonal. The natural-gradient literature \cite{pascanu2014revisiting} similarly distinguishes between theoretical properties of the true FIM and the behavior of tractable approximations. We regard the development of reparametrization-robust empirical FIM estimators as an important avenue for future work, and we caution the reader that the invariance claim in our abstract and contributions is a property of $\mathcal{S}_R$ defined with the \emph{true} FIM; the empirical $\hat{\mathcal{S}}_R$ used in all experiments inherits this invariance only approximately.

\section{Conclusion}

We have presented a geometric framework for understanding the implicit bias of mini-batch SGD through the lens of information geometry. By replacing the Euclidean metric on parameter space with the Fisher Information Matrix, we have defined a reparametrization-invariant sharpness measure — Riemannian sharpness $\mathcal{S}_R$ — which addresses the fundamental objection to the flatness-generalization narrative raised by \cite{dinh2017sharp}. Our main results establish that (i) $\mathcal{S}_R$ is invariant under smooth function-preserving reparametrizations (Lemma~1); (ii) the stationary distribution of mini-batch SGD assigns exponentially greater probability mass to Riemannian-flat minima (Theorem~1, Corollary~1); and (iii) $\mathcal{S}_R$ controls generalization through a PAC-Bayes bound (Corollary~2).

Empirically, $\mathcal{S}_R$ is a reliable predictor of generalization across batch sizes, learning rates, and optimizers on both MNIST and CIFAR-10. The scaling of $\mathcal{S}_R$ with $\eta/B$ confirms Theorem~1 on MNIST, where the scalar claim of Assumption~\ref{ass:noise} holds closely. On CIFAR-10 the directional alignment between the gradient noise covariance and the FIM is very high (cosine $> 0.984$), but the $\eta/B$ scaling is less clean, particularly at large learning rates where training instability confounds the geometric interpretation. The behavior of Adam suggests a natural extension of our framework to preconditioned optimizers, where the relevant flatness measure is defined with respect to the optimizer's own preconditioning matrix rather than the FIM.

\textbf{Limitations.} Assumption~\ref{ass:noise} (Fisher noise structure) is an approximation that holds well directionally but whose scalar magnitude varies with architecture and dataset. The diagonal FIM approximation discards off-diagonal correlations, and the continuous-time SDE limit requires sufficiently small $\eta$. The empirical $\hat{\mathcal{S}}_R$ is not fully reparametrization-invariant in practice, though it remains substantially more stable than $\mathcal{S}_E$ under rescaling, as discussed in Section~\ref{sec:reparam}. On CIFAR-10, the relationship between $\mathcal{S}_R$ and generalization is not monotone across all learning rates, indicating that the metric is most informative when the optimizer has converged to a genuine minimum. Future work should extend validation to larger architectures and structured (Kronecker-factored) FIM estimates.

\textbf{Broader Impact.} Our results provide a theoretically grounded justification for small-batch training as a geometric regularization strategy and suggest that flatness-aware objectives such as SAM \cite{foret2021sharpness} may be most naturally formulated in the Riemannian geometry of parameter space.


\section{Acknowledgment}
The authors declare no external funding for this work. During the preparation of this manuscript, the authors used AI-assisted tools (Claude, DeepSeek) for coding and debugging assistance, and Grammarly for grammar and language suggestions. All theoretical development, experimental design, analysis, and conclusions are the sole responsibility of the authors.

\bibliographystyle{plainnat}

\bibliography{references}

@article{hochreiter1997flat,
  author    = {S. Hochreiter and J. Schmidhuber},
  title     = {Flat minima},
  journal   = {Neural Computation},
  volume    = {9},
  number    = {1},
  pages     = {1--42},
  year      = {1997}
}

@inproceedings{chaudhari2017entropy,
  author    = {P. Chaudhari and A. Choromanska and S. Soatto and Y. LeCun and C. Baldassi and C. Borgs and J. Chayes and L. Sagun and R. Zecchina},
  title     = {Entropy-{SGD}: Biasing Gradient Descent into Wide Valleys},
  booktitle = {Proceedings of the International Conference on Learning Representations (ICLR)},
  year      = {2017}
}

@article{amari1998natural,
  author    = {S. Amari},
  title     = {Natural Gradient Works Efficiently in Learning},
  journal   = {Neural Computation},
  volume    = {10},
  number    = {2},
  pages     = {251--276},
  year      = {1998}
}

@inproceedings{mulayoff2020implicit,
  author    = {R. Mulayoff and T. Michaeli},
  title     = {Unique Properties of Flat Minima in Deep Networks},
  booktitle = {Proceedings of the International Conference on Machine Learning (ICML)},
  year      = {2020}
}

@inproceedings{nacson2022implicit,
  author    = {M. S. Nacson and K. Ravikumar and N. Srebro and D. Soudry},
  title     = {Implicit Bias of {SGD} for Diagonal Linear Networks: A Provable Benefit of Stochasticity},
  booktitle = {Advances in Neural Information Processing Systems (NeurIPS)},
  year      = {2022}
}

@inproceedings{dinh2017sharp,
  author    = {L. Dinh and R. Pascanu and S. Bengio and Y. Bengio},
  title     = {Sharp Minima Can Generalize for Deep Nets},
  booktitle = {Proceedings of the International Conference on Machine Learning (ICML)},
  year      = {2017}
}

@inproceedings{li2017stochastic,
  author    = {Q. Li and C. Tai and W. E},
  title     = {Stochastic Modified Equations and Adaptive Stochastic Gradient Algorithms},
  booktitle = {Proceedings of the International Conference on Machine Learning (ICML)},
  year      = {2017}
}

@inproceedings{mcallester1999pac,
  author    = {D. A. McAllester},
  title     = {{PAC-Bayesian} Model Averaging},
  booktitle = {Proceedings of the Conference on Computational Learning Theory (COLT)},
  year      = {1999}
}

@inproceedings{kunstner2019limitations,
  title     = {Limitations of the Empirical {Fisher} Approximation for Natural Gradient Descent},
  author    = {Kunstner, Frederik and Balles, Lukas and Hennig, Philipp},
  booktitle = {Advances in Neural Information Processing Systems},
  volume    = {32},
  pages     = {4156--4167},
  year      = {2019},
  publisher = {Curran Associates, Inc.}
}

@inproceedings{pascanu2014revisiting,
  title     = {Revisiting Natural Gradient for Deep Networks},
  author    = {Pascanu, Razvan and Bengio, Yoshua},
  booktitle = {International Conference on Learning Representations},
  year      = {2014}
}

@inproceedings{dziugaite2017computing,
  author    = {G. K. Dziugaite and D. M. Roy},
  title     = {Computing Nonvacuous Generalization Bounds for Deep (Stochastic) Neural Networks with Many Parameters},
  booktitle = {Proceedings of the Conference on Uncertainty in Artificial Intelligence (UAI)},
  year      = {2017}
}

@book{cencov1982statistical,
  author    = {N. N. {\v{C}}encov},
  title     = {Statistical Decision Rules and Optimal Inference},
  publisher = {American Mathematical Society},
  address   = {Providence, RI},
  year      = {1982}
}

@inproceedings{keskar2017large,
  author    = {N. S. Keskar and D. Mudigere and J. Nocedal and M. Smelyanskiy and P. T. P. Tang},
  title     = {On Large-Batch Training for Deep Learning: Generalization Gap and Sharp Minima},
  booktitle = {Proceedings of the International Conference on Learning Representations (ICLR)},
  year      = {2017}
}

@inproceedings{neyshabur2017exploring,
  author    = {B. Neyshabur and S. Bhojanapalli and D. McAllester and N. Srebro},
  title     = {Exploring Generalization in Deep Learning},
  booktitle = {Advances in Neural Information Processing Systems (NeurIPS)},
  pages     = {5949--5958},
  year      = {2017}
}

@inproceedings{foret2021sharpness,
  author    = {P. Foret and A. Kleiner and H. Mobahi and B. Neyshabur},
  title     = {Sharpness-Aware Minimization for Efficiently Improving Generalization},
  booktitle = {Proceedings of the International Conference on Learning Representations (ICLR)},
  year      = {2021}
}

@inproceedings{martens2015optimizing,
  author    = {J. Martens and R. Grosse},
  title     = {Optimizing Neural Networks with {Kronecker}-Factored Approximate Curvature},
  booktitle = {Proceedings of the International Conference on Machine Learning (ICML)},
  year      = {2015}
}

@inproceedings{wilson2020marginal,
  author    = {A. G. Wilson and P. Izmailov},
  title     = {Bayesian Deep Learning and a Probabilistic Perspective of Generalization},
  booktitle = {Advances in Neural Information Processing Systems (NeurIPS)},
  year      = {2020}
}

@inproceedings{jang2022reparametrization,
  author    = {C. Jang and S. Lee and F. C. Park and Y.-K. Noh},
  title     = {A Reparametrization-Invariant Sharpness Measure Based on Information Geometry},
  booktitle = {Advances in Neural Information Processing Systems (NeurIPS)},
  year      = {2022}
}

@inproceedings{kristiadi2023geometry,
  author    = {A. Kristiadi and F. Dangel and P. Hennig},
  title     = {The Geometry of Neural Nets' Parameter Spaces under Reparametrization},
  booktitle = {Advances in Neural Information Processing Systems (NeurIPS)},
  year      = {2023}
}

\vskip 0.2in

\end{document}